\newtheorem{theorem}{Theorem}[section]
\newtheorem{lemma}[theorem]{Lemma}
\newsavebox{\figurebox}
\newcommand{\blue}[1]{\textcolor{blue}{#1}}
\newcommand*{\addFileDependency}[1]{
  \typeout{(#1)}
  \@addtofilelist{#1}
  \IfFileExists{#1}{}{\typeout{No file #1.}}
}
\newcommand{\magenta}[1]{\textcolor{black}{#1}}
\newcommand{\adel}[1]{\magenta{ #1}}
\newcommand{\eg}{\emph{e.g.}}
\newcommand{\ie}{\emph{i.e.}}
\newcommand{\etal}{\emph{et al.}}
\title{Curriculum learning for multilevel budgeted combinatorial problems}
\author{%
  Adel Nabli \qquad  Margarida Carvalho \\
    CIRRELT and D\'epartement d'Informatique et de Recherche Op\'erationnelle\\
   Universit\'e de Montr\'eal\\
  \texttt{adel.nabli@umontreal.ca}\\
  \texttt{carvalho@iro.umontreal.ca} \\
}
\begin{document}

\maketitle

\begin{abstract}

Learning heuristics for combinatorial optimization problems through graph neural networks have recently shown promising results on some classic NP-hard problems. These are single-level optimization problems with only one player. Multilevel combinatorial optimization problems are their generalization, encompassing situations with multiple players taking decisions sequentially. By framing them in a multi-agent reinforcement learning setting, we devise a value-based method to learn to solve multilevel budgeted combinatorial problems involving two players in a zero-sum game over a graph. Our framework is based on a simple curriculum: if an agent knows how to estimate the value of instances with budgets up to $B$, then solving instances with budget $B+1$ can be done in polynomial time regardless of the direction of the optimization by checking the value of every possible afterstate. Thus, in a bottom-up approach, we generate datasets of heuristically solved instances with increasingly larger budgets to train our agent. We report results close to optimality on graphs up to $100$ nodes and a $185 \times$ speedup on average compared to the quickest exact solver known for the Multilevel Critical Node problem, a max-min-max trilevel problem that has been shown to be at least $\Sigma_2^p$-hard.

\end{abstract}

\section{Introduction}


The design of heuristics to tackle real-world instances of NP-hard combinatorial optimization problems over graphs has attracted the attention of many Computer Scientists over the years \cite{bookApprox}. With advances in Deep Learning \cite{Goodfellow-et-al-2016} and Graph Neural Networks \cite{Wu_2020}, the idea of leveraging the recurrent structures appearing in the combinatorial objects belonging to a distribution of instances of a given problem to learn efficient heuristics with a Reinforcement Learning (RL) framework has received an increased interest \cite{bengio2018machine, mazyavkina2020reinforcement}. Although these approaches show promising results on many fundamental NP-hard problems over graphs, such as Maximum Cut \cite{Barrett2019ExploratoryCO} or the Traveling Salesman Problem \cite{kool2018attention}, the range of combinatorial challenges on which they are directly applicable is still limited.

Indeed, most of the combinatorial problems over graphs solved heuristically with Deep Learning  \cite{bai2020fast, Barrett2019ExploratoryCO, bello2016neural, Cappart2018, Dai2017, kool2018attention, Li2018, ma2019combinatorial} are classic NP-hard problems for which the canonical optimization formulation is a single-level Mixed Integer Linear Program: there is one decision-maker\footnote{We will use interchangeably the words decision-maker, agent and player. Note that decision-maker, player and agent are usually used in Operations Research, Game Theory and Reinforcement Learning, respectively. Similarly for the words decision, strategy and policy.} seeking to minimize a linear cost subject to linear constraints and integer requirements. However, in many real-world situations, decision-makers interact with each other.  A particular case of such setting are sequential games with a hierarchy between players: an upper level authority (a leader) optimizes its goal subject to the response of a sequence of followers seeking to optimize their own objectives given the actions previously made by others higher in the hierarchy. These problems are naturally modeled as Multilevel Programming problems (MPs) and can be seen as a succession of nested optimization tasks, {\ie}  mathematical programs with optimization problems in the constraints \cite{Bracken1973, Candler77, Zhang2015}.

Thus, finding an optimal strategy for the leader in the multilevel setting may be harder than for single-level problems as evaluating the cost of a given strategy might not be possible in polynomial time: it requires solving the followers optimization problems. In fact, even Multilevel Linear Programming with a sequence of $L+1$ players (levels) is $\Sigma_L^p$-hard \cite{Blair1992, Dudas1998, Jeroslow1985}. In practice, exact methods capable to tackle medium-sized instances in reasonable time have been developed for max-min-max Trilevels, min-max Bilevels and more general Bilevel Programs (\eg, \cite{CarvalhoKEP2020,Fischetti2017,Fischetti2019interdiction,Lozano2017,Tahernejad2020}).

Despite the computational challenges intrinsic to MPs, these formulations are of practical interest as they properly model hierarchical decision problems. Originally appearing in economics in the bilevel form, designated as \textit{Stackelberg competitions} \cite{Stackelberg1935MarktformUG}, they have since been extended to more than two agents and seen their use explode in Operations Research \cite{Lachhwani2017,Sinha2018}. Thus, research efforts have been directed at finding good quality heuristics to solve those problems, {\eg}~\cite{DeNegre2011,FISCHETTI2018,Forghani2020,Talbi2013}. Hence, one can ask whether we can make an agent learn how to solve a wide range of instances of a given multilevel problem, extending the success of recent Deep Learning approaches on solving single-level combinatorial problems to higher levels.

In this paper, we propose a simple curriculum to learn to solve a common type of multilevel combinatorial optimization problem: budgeted ones that are zero-sum games played in a graph. Although the framework we devise is set to be general, we center our attention on the Multilevel Critical Node problem (MCN) \cite{Baggio2017MultilevelAF} and its variants. The reasons for such a choice are manifold.
First, the MCN is an example of a Defender-Attacker-Defender game \cite{Brown2006} which received much  attention lately as it aims to find the best preventive strategies to defend critical network infrastructures against malicious attacks. As it falls under the global framework of network interdiction games, it is also related to many other interdiction problems with applications ranging from floods control \cite{Ratcliff75} to the decomposition of matrices into blocks \cite{Furini2019CastingLO}.
Moreover, an exact method to solve the problem has been presented in \cite{Baggio2017MultilevelAF} along with a publicly available dataset of solved instances\footnote{\url{https://github.com/mxmmargarida/Critical-Node-Problem}}, which we can use to assess the quality of our heuristic. Lastly, complexity results are available for several variants and sub-problems of MCN, indicating its challenging nature~\cite{UsComplexity}.

\textbf{Contributions.} Our contribution rests on several steps. First, we frame generic \textit{Multilevel Budgeted Combinatorial problems} (MBC) as \textit{Alternating Markov Games} \cite{LittmanThesis, Littman96}. This allows us to devise a first algorithm, \hyperref[MultiLDQN]{MultiL-DQN}, to learn $Q$-values. By leveraging both the combinatorial setting \textit{(the environment is deterministic)} and the budgeted case \textit{(the length of an episode is known in advance)}, we motivate a curriculum, \hyperref[MultiLC]{MultiL-Cur}. Introducing a Graph Neural Networks based agent, we empirically demonstrate the efficiency of our curriculum on $3$ versions of the MCN, reporting results close to optimality on graphs of size up to $100$. 

\textbf{Paper structure.} Section~\ref{sec:statement} formalizes the MBC problem. In Section~\ref{sec:literature}, we provide an overview of the relevant literature. The MBC is formulated within the Multi-Agent RL setting in Section~\ref{sec:multi_RL} along with the presentation of our algorithmic approaches: \hyperref[MultiLDQN]{MultiL-DQN} and \hyperref[MultiLC]{MultiL-Cur}. Section~\ref{sec:MCN} states the particular game MCN in which our methodology is validated in Section~\ref{sec:results}.

\section{Problem statement}\label{sec:statement}

The general setting for the MPs we are considering is the following: given a graph $G=(V, A)$, two concurrent players, the \textit{leader} and the \textit{follower}, compete over the same combinatorial quantity $S$, with the \textit{leader} aiming to maximize it and the \textit{follower} to minimize it. They are given a total number of moves $L \in \mathbb{N}$ and a sequence of budgets $(b_1, ..., b_L) \in \mathbb{N}^L$. Although our study and algorithms also apply to general integer cost functions $c$, for the sake of clarity, we will only consider situations where the cost of a move is its  cardinality. We focus on perfect-information games, {\ie} both players have full knowledge of the budgets allocated and previous moves. The \textit{leader} always begins and the last move is attributed by the parity of $L$. At each turn $l \in [\![1, L]\!]$, the player concerned makes a set of $b_l$ decisions about the graph. This set is denoted by $A_l$ and constrained by the previous moves $({A}_1, .., {A}_{l-1})$. We consider games where players can only improve their objective by taking a decision: there is no incentive to pass. Without loss of generality, we can assume that $L$ is odd. Then, the Multilevel Budgeted Combinatorial problem (MBC) can be formalized as:
\begin{equation}
 \textrm{(MBC)} \qquad  \max_{|{A}_1| \leq b_1} \min_{|{A}_{2}| \leq b_{2}} ...  \max_{|{A}_L| \leq b_L} S(G, {A}_1, {A}_2, ..., {A}_L).
    \label{MBC}
\end{equation}
MBC is a zero-sum game as both leader and follower have the same objective function but their direction of optimization is opposite.
A particular combinatorial optimization problem is defined by specifying the quantity $S$, fixing $L$, and by characterizing the nature of both the graph \textit{(e.g directed, weighted)} and of the actions allowed at each turn \textit{(e.g labeling edges, removing nodes)}. The problem being fixed, a distribution $\mathbb{D}$ of instances $i \sim \mathbb{D}$ is determined by setting a sampling law for random graphs and for the other parameters, having specified bounds beforehand: $n = |V| \in [\![n^{min}, n^{max}]\!]$, $|A| \in [\![d^{min} \times n(n-1), d^{max} \times n(n-1)]\!]$, $(b_1, ..., b_L) \in [\![b_1^{min}, b_1^{max}]\!] \times ... \times [\![b_L^{min}, b_L^{max}]\!]$. Our ultimate goal is thus to learn good quality heuristics that manage to solve each $i \sim \mathbb{D}$.

In order to achieve that, we aim to leverage the recurrent structures appearing in the combinatorial objects in the distribution $\mathbb{D}$ by learning graph embeddings that could guide the decision process. As data is usually very scarce (datasets of exactly solved instances being hard to produce), the go-to framework to learn useful representations in these situations is Reinforcement Learning \cite{Sutton98}.

\section{Related Work}\label{sec:literature}

The combination of graph embedding with reinforcement learning to learn to solve distributions of instances of combinatorial problems was introduced by Dai \etal~\cite{Dai2017}. Thanks to their S2V-DQN meta-algorithm, they managed to show promising results on three classic budget-free NP-hard problems. Thenceforth, there is a growing number of methods proposed to either improve upon S2V-DQN results \cite{Barrett2019ExploratoryCO, Cappart2018, kool2018attention, Li2018, ma2019combinatorial} or tackle other types of NP-hard problems on graphs \cite{bai2020fast}. As all these approaches focus on single player games, they are not directly applicable to MBC.

To tackle the multiplayer case, Multi-Agent Reinforcement Learning (MARL)~\cite{Littman94, SHOHAM07} appears as the natural toolbox. The combination of Deep Learning with RL recently led to one of the most significant breakthrough in perfect-information, sequential two-player games: AlphaGo \cite{Silver2017}. Although neural network based agents managed to exceed human abilities on other combinatorial games ({\eg} backgammon \cite{TESAURO2002}), these approaches focus on one fixed board game. Thus, they effectively learn to solve only one (particularly hard) instance of a combinatorial problem, whereas we aim to solve a whole distribution of them. Hence, the MBC problem we propose to study is at a crossroads between previous works on MARL and deep learning for combinatorial optimization.

\adel{An MBC with $L$ levels is potentially $\Sigma_L^p$-hard, and hence, it can be challenging to solve. One of the strategies used to learn to solve complex problems is to break them down in a sequence of learning tasks that are increasingly harder, a concept known as Curriculum Learning \cite{BengioCur}. For supervised learning, Bengio \etal~\cite{BengioCur} showed that gradually increasing the entropy of the training distribution helped. However in RL, breaking down a task in sub-problems that can be ordered by difficulty is non trivial \cite{graves2017automated}. In robotics, \cite{florensa2017reverse, ivanovic2018barc} proposed to start from the \emph{goal} (\eg , open a door) and give a starting state that is gradually further from that goal. These methods assume at least one known goal state that is used as a seed for expansion. For video games, \cite{salimans2018learning} adapted the concept with a starting state increasingly further from the end of a demonstration. However, here, the goal is not to ``reach a particular end state'': there is no goal state at all. Rather, what we want is to ``take an optimal decision at each level'', and, particularly, take optimal decisions at the beginning of the game, \ie, at the highest level of the multilevel optimization problem, given that all subsequent actions \emph{will} be optimal. To do that, we show that we can start training our agent on the penultimate states of the sequential decision processes, where the optimization problems are easy to solve as there is only one unit of budget left, and, gradually, consider instances with larger budgets, which effectively corresponds to problems arriving earlier in the sequence of decisions, until we return to the original problem. Thus, contrary to \cite{florensa2017reverse, ivanovic2018barc, salimans2018learning}, we do not ``reverse time'' to artificially build a sequence of tasks starting further from a goal state and, subsequently, harder to solve it in the hope of learning how to reach this goal from all possible starting states. Rather, we stack new optimization problems on top of previous ones, which gradually increases the computational complexity of the task, in order to learn to act optimally in optimization problems with an increasing number of levels.}

Finally, taking another direction, some shifted their attention from specific problems to rather focus on general purpose solvers. For example, methodologies have been proposed to speed up the branch-and-bound implementations for (single-level) linear combinatorial problems by learning to branch \cite{Balcan2018} using  Graph Convolutional Neural Networks \cite{GasseCFC019}, Deep Neural Networks~\cite{Zarpellon2020} or RL~\cite{KedadSidhoum2020} to name some recent works; see the surveys~\cite{bengio2018machine,Zarpellon2017}.
To the best of our knowledge, the literature on machine learning approaches for general multilevel optimization is restricted to the linear non-combinatorial case. For instance, in \cite{He2014, Shih2004} the linear multilevel problems are converted into a system of differential equations and solved using recurrent neural networks.

\section{Multi-Agent Reinforcement Learning framework}\label{sec:multi_RL}

Whereas single agent RL is usually described with Markov Decision Processes, the framework needs to be extended to account for multiple agents. This has been done in the seminal work of Shapley \cite{Shapley1095} by introducing Markov Games. In our case, we want to model two-player games in which moves are not played simultaneously but alternately. The natural setting for such situation was introduced by Littman in \cite{Littman94, LittmanThesis, Littman96} under the name of \textit{Alternating Markov Games}.

\subsection{Alternating Markov Games}

An Alternating Markov Game involves two players: a maximizer and a minimizer. It is defined by the tuple $\langle \mathcal{S}_1, \mathcal{S}_2, \mathcal{A}_1, \mathcal{A}_2, P, R \rangle$ with $\mathcal{S}_i$ and $\mathcal{A}_i$ the set of states and actions, respectively, for player $i$, $P$ the transition function mapping state-actions pairs to probabilities of next states and $R$ a reward function. For $s \in \mathcal{S} = \mathcal{S}_1 \cup \mathcal{S}_2$, we define $V^*(s)$ as the expected reward of the concerned agent for following the optimal minimax policy against an optimal opponent starting from state $s$. In a similar fashion, $Q^*(s,a)$ is the expected reward for the player taking action $a$ in state $s$ and both agents behaving optimally thereafter. Finally, with the introduction of the discount factor $\gamma$, we can write the generalized Bellman equations for Alternating Markov Games \cite{LittmanThesis}:
\begin{gather}
    V^*(s) = \left\{
          \begin{array}{ll}
           \max_{a_1 \in \mathcal{A}_1} Q^*(s, a_1) & \qquad \mathrm{if}\; s \in \mathcal{S}_1 \\
            \min_{a_2 \in \mathcal{A}_2} Q^*(s, a_2) & \qquad \mathrm{otherwise} \\
          \end{array}
        \right.
        \label{Bellman1}
    \\
    Q^*(s,a) = R(s,a) + \gamma \sum_{s'}P(s,a,s')V^*(s').
    \label{Bellman2}
\end{gather}

\subsection{MARL formulation of the Multilevel Budgeted Combinatorial problem}
\label{sectionMARL}
We now have all the elements to frame the MBC in the Alternating Markov Game framework. The \textit{leader} is the maximizer and the \textit{follower} the minimizer. The states $s_t$ consist of a graph $G_t$ and a tuple of budgets $\mathcal{B}_t = (b_1^t, ..., b_L^t) \adel{= (0,...,0, k, b_{l+1},...,b_L)}$ \adel{ with $k \leq b_l$}, beginning with $s_0 \sim \mathbb{D}$. Thus, the value function is defined with:
\begin{equation}
    V^*(s_0) = \max_{|{A}_1| \leq b_1^0} \min_{|{A}_{2}| \leq b_{2}^0} ...  \max_{|{A}_L| \leq b_L^0} S(G_0, {A}_1, {A}_2, ..., {A}_L).
\end{equation}
The game is naturally sequential with an episode length of $L$: each time step $t$ corresponds to a level $l \in [\![1,L]\!]$. The challenge of such formulation is the size of the action space that can become large quickly. Indeed, in a graph $G$ with $n$ nodes, and $leader$'s budget $b_1$, if the action that he/she can perform is \textit{"removing a set of nodes from the graph"} (a common move in network interdiction games), then the size of the action space for the first move of the game is $\binom{n}{b_1}$. To remedy this, \adel{we redefine what is a step in the sequential decision process}. We define $\mathcal{A}_1, ..., \mathcal{A}_L$ the sets of \textit{individual} decisions available at each level $l$. Then, we make the simplifying observation that a player making a \textit{set} of $b_l$ decisions $A_l$ in one move is the same as him/her making a \textit{sequence} of $b_l$ decisions $(a_l^1, ..., a_l^{b_l}) \in \mathcal{A}_l \times \left(\mathcal{A}_l \backslash \{a_l^1\} \right) \times ... \times \left(\mathcal{A}_l \backslash \{a_l^1, .., a_l^{b_l -1} \} \right)$ in one strike. More formally, we have the simple lemma \textit{(proof in \hyperref[LemmaAppendix1]{Appendix \ref{LemmaAppendix1}})}:
\begin{lemma} The Multilevel Budgeted Combinatorial optimization problem \hyperref[MBC]{(\ref{MBC})} is equivalent to:
\begin{equation*}
    \max_{a_1^1 \in \mathcal{A}_1} ... \max_{ a_1^{b_1} \in \mathcal{A}_1 \backslash \{a_1^1, .., a_1^{b_1 - 1} \}} \min_{a_2^1 \in \mathcal{A}_2} ... \max_{a_L^{b_L} \in \mathcal{A}_L \backslash \{a_L^1, .., a_L^{b_L - 1} \} } S(G, \{a_1^1, .., a_1^{b_1} \},..,\{a_L^1,.., a_L^{b_L} \}).
\end{equation*}
\label{Lemma1}
\end{lemma}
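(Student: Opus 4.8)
The plan is to establish the equivalence one optimization level at a time, relying on two elementary facts: that a nested optimization in which every operator points in the same direction collapses into a single joint optimization, and that the objective $S$ depends only on the \emph{sets} $A_1, \ldots, A_L$ and not on the order in which their elements are listed.

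First I would isolate a single level, say a leader's (maximizer's) level $l$ with budget $b_l$, and consider an arbitrary function $g$ of the set $A_l$; this $g$ will later be instantiated as the value obtained by optimal play at all deeper levels, which is well defined given the decisions already made. The core identity to establish is
\begin{equation*}
\max_{|A_l| \leq b_l} g(A_l) = \max_{a^1 \in \mathcal{A}_l} \max_{a^2 \in \mathcal{A}_l \setminus \{a^1\}} \cdots \max_{a^{b_l} \in \mathcal{A}_l \setminus \{a^1, \ldots, a^{b_l-1}\}} g(\{a^1, \ldots, a^{b_l}\}).
\end{equation*}
The right-hand side is a maximization over all ordered tuples of $b_l$ pairwise distinct individual decisions; since $g$ is invariant under permuting the entries of the tuple, this equals the maximum of $g$ over all \emph{sets} of size exactly $b_l$. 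The no-incentive-to-pass assumption stated in Section~\ref{sec:statement} guarantees that an optimal player always exhausts the budget, so maximizing over sets of size exactly $b_l$ coincides with maximizing over sets of size at most $b_l$, recovering the left-hand side. The minimizer's levels are treated identically, replacing every $\max$ by $\min$; the argument is unaffected because within a single level all operators share the same direction.

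Next I would iterate this substitution across the $L$ levels. Starting from the nested formulation \eqref{MBC}, I replace each single set-optimization by the corresponding chain of individual-decision optimizations via the identity above, holding the outer operators fixed. The order in which the levels are processed is immaterial, since expanding one level changes neither the operator type nor the value of the optimization at that level, so the overall value is preserved at each step; after $L$ such substitutions the expression matches exactly the nested form in the statement.

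The main obstacle I anticipate is bookkeeping rather than conceptual: one must verify that the only constraint coupling decisions \emph{within} a single level is distinctness, i.e., the feasibility of $A_l$ is governed by the earlier levels' moves $A_1, \ldots, A_{l-1}$ and not by the intra-level ordering, and that at least $b_l$ distinct decisions remain available so that the chain of maxima is taken over nonempty sets. Granting the ``no pass'' hypothesis together with this feasibility, the permutation-invariance of $S$ does all the real work, which is precisely why the result is a simple lemma.
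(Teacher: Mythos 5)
Your proof is correct and follows essentially the same route as the paper's: the paper peels off one individual decision at a time from the budgeted set-optimization and recurses, whereas you collapse an entire level's chain at once via permutation invariance of $S$ and then iterate over levels --- the same underlying idea. You are in fact more explicit than the paper about the two points its one-line proof leaves implicit, namely that the no-pass assumption is what reconciles ``sets of size at most $b_l$'' with ``exactly $b_l$ distinct choices,'' and that enough distinct actions must remain available for the chain of optima to be well defined.
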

In this setting, the length of an episode is no longer $L$ but $B= b_1 + ... + b_L$: the \textit{leader} makes $b_1$ sequential actions, then the \textit{follower} the $b_2$ following ones, etc. To simplify the notations, we re-define the $\mathcal{A}_t$ as the sets of actions available for the agent playing at time $t$. As each action takes place on the graph, $\mathcal{A}_t$ is readable from $s_t$. Moreover, we now have $|\mathcal{A}_t| = \mathcal{O}(|V| + |A|)$.

The environments considered in MBC are deterministic and their dynamics are completely known. Indeed, given a graph $G_t$, a tuple of budgets $\mathcal{B}_t = (b_1^t,..,b_L^t)$ and a chosen action $a_t \in \mathcal{A}_t$ \textit{(e.g removing the node $a_t$)}, the subsequent graph $G_{t+1}$, tuple of budgets $\mathcal{B}_{t+1}$ and next player are completely and uniquely determined. Thus, we can introduce \textit{the next state function} $N$ that maps state-action couples $(s_t, a_t)$ to the resulting afterstate $s_{t+1}$, and $p$ as the function that maps the current state $s_t$ to the player $p \in \{1,2\}$ whose turn it is to play. As early rewards weight the same as late ones, we set $\gamma = 1$. Finally, we can re-write equations \hyperref[Bellman1]{(\ref{Bellman1})} and \hyperref[Bellman2]{(\ref{Bellman2})} as:
\begin{gather}
    V^*(s_t) = \left\{
          \begin{array}{ll}
           \max_{a_t \in \mathcal{A}_t} \left( R(s_t, a_t) + V^*(N(s_t, a_t)) \right) & \qquad \mathrm{if}\; p(s_t) = 1 \\
            \min_{a_t \in \mathcal{A}_t} \left( R(s_t, a_t) + V^*(N(s_t, a_t)) \right) & \qquad \mathrm{otherwise} \\
          \end{array}
        \right. \\
    Q^*(s_t, a_t) = R(s_t, a_t) + V^*(N(s_t, a_t)).
    \label{Bellman3}
\end{gather}
The definition of $R$ depends on the combinatorial quantity $S$ and the nature of the actions allowed. 

\subsection{Q-learning for the greedy policy}

Having framed the MBC in the Markov Game framework, the next step is to look at established algorithms to learn $Q^*$ in this setup. Littman originaly presented \textit{minimax Q-learning} \cite{Littman94, LittmanThesis} to do so, but in matrix games, \adel{where all possible outcomes are enumerated.} An extension using a neural network $\hat{Q}$ to estimate $Q$ has been discussed in \cite{fan2019}. However, their algorithm, Minimax-DQN, is suited for the simultaneous setting and not the alternating one. The main difference being that the former requires the extra work of solving a Nash game between the two players at each step, which is unnecessary in the later as a greedy policy exists \cite{Littman94}. To bring Minimax-DQN to the alternating case, we present \hyperref[MultiLDQN]{MultiL-DQN}, an algorithm inspired by S2V-DQN \cite{Dai2017} but extended to the multilevel setting\adel{: we alternate between a $\min$ and a $\max$ in both the greedy rollout and the target definition.} See \hyperref[MultiLDQNSection]{Appendix \ref{MultiLDQNSection}} for the pseudo-code.

\subsection{A curriculum taking advantage of the budgeted combinatorial setting}

With \hyperref[MultiLDQN]{MultiL-DQN}, the learning agent directly tries to solve instances drawn from $\mathbb{D}$, which can be very hard theoretically speaking. However,  \hyperref[Lemma1]{Lemma \ref{Lemma1}} shows that, at the finest level of granularity, MBC is actually made of $B$ nested sub-problems. As we know $B_{max}=b_1^{max} + ...+ b_L^{max}$, the maximum number of levels considered in $\mathbb{D}$, instead of directly trying to learn the values of the instances from this distribution, we can ask whether beginning by focusing on the simpler sub-problems and gradually build our way up to the hard ones would result in better final results. 

This reasoning is motivated by the work done by Bengio \etal~\cite{BengioCur} on Curriculum Learning. Indeed, it has been shown empirically that breaking down the target training distribution into a sequence of increasingly harder ones actually results in better generalization abilities for the learning agent. But, contrary to the \textit{continuous} setting devised in their work, where the parameter governing the hardness \textit{(entropy)} of the distributions considered is continuously varying between $0$ and $1$, here we have a natural \textit{discrete} sequence of increasingly harder distributions to sample instances from.

Indeed, our ultimate goal is to learn an approximate function $\hat{Q}$ to $Q^*$ \textit{(or equivalently $\hat{V}$ to $V^*$ \hyperref[Bellman3]{(\ref{Bellman3})})} so that we can apply the subsequent greedy policy to take a decision. Thus, $\hat{Q}$ has to estimate the state-action values of every instance appearing in a sequence of $B$ decisions. Although the \textit{leader} makes the first move on instances from $\mathbb{D}$, as our game is played on the graph itself, the distribution of instances on which the second decision is made is no longer $\mathbb{D}$ but \textit{instances from $\mathbb{D}$ on which a first optimal action for the leader has been made}. If we introduce the function
\begin{equation}
    \begin{array}{rl}  N_{l,\pi^*}^{k}: \ \ 
         \mathcal{S} & \longrightarrow \quad \mathcal{S}  \\
         s_t = (G_t, \mathcal{B}_t) & \mapsto
         \left\{
          \begin{array}{ll}
            N(s_t, a^*_t \sim \pi^*(\mathcal{A}_t)) & \quad \mathrm{if}\quad \mathcal{B}_t=(0,..,0,k, b_{l+1}, .., b_L) \\
            s_t & \quad \mathrm{otherwise} \\
          \end{array}
        \right. \\
    \end{array}
\end{equation}
then, from top to bottom, we want $\hat{Q}$ to estimate the values of taking an action starting from the states $\mathbb{D}_1^* = \mathbb{D}$ where the first action is made, then $\mathbb{D}_2^*=N_{1, \pi^*}^{b_1^{max}}(\mathbb{D})$ where the second action is made on instances with original budget $b_1^{max}$ at level $1$, and all the way down to
\begin{equation}
   \mathbb{D}_{B_{max}}^* = N_{L,\pi^*}^{2} \circ ... \circ N_{L, \pi^*}^{b_{L}^{max}} \circ ... \adel{ \circ N_{1,\pi^*}^{b_1^{max}-1}} \circ N_{1,\pi^*}^{b_1^{max}} (\mathbb{D}).
    \label{def_D}
\end{equation}

\renewcommand*\footnoterule{}
\begin{wrapfigure}[20]{R}{0.76\textwidth}
{\footnotesize
\IncMargin{1em}
\begin{algorithm}[H]
\label{MultiLC}
\SetKwFunction{GreedyRollout}{GreedyRollout}
\caption{MultiL-Cur}
Initialize the value-network $\hat{V}$ with weights $\hat{\theta}$ \;
Initialize the list of experts $\mathbb{L}_{\hat{V}}$ to be empty \;
\For{$j=B_{max}, ..., 2$}{
Create $\mathcal{D}_{train}^j$, $\mathcal{D}_{val}^j$ by sampling ($s_j^r \sim \mathbb{D}^r_j$, \GreedyRollout\footnote{The pseudo-code for the \texttt{GreedyRollout} function is available in \hyperref[GreedyRollout]{Appendix \ref{GreedyRollout}}}({$s_j^r, \mathbb{L}_{\hat{V}}$}))\;
Initialize $\hat{V}_j$, the expert of level $j$ with $\hat{\theta}_j = \hat{\theta}$ \;
Initialize the loss on the validation set $\mathcal{L}_{val}^j$ to $\infty$ \;
\For{epoch $e=1,...,E$}{
\For{batches $(s_i, \hat{y}_i)_{i=1}^m \in \mathcal{D}_{train}^j$}{
Update $\hat{\theta}$ over $\frac{1}{m} \sum_{i=1}^m \left(\hat{y}_i - \hat{V}(s_i)\right)^2$ with Adam \cite{Kingma15} \;
\If{number of new updates = $T_{val}$}{
\If{$\mathcal{L}_{val}^{new} = \frac{1}{N_{val}} \sum_{k=1}^{N_{val}} \left(\hat{y}_k - \hat{V}(s_k)\right)^2 < \mathcal{L}_{val}^j$}{
$\hat{\theta}_j \leftarrow \hat{\theta}$ ; $\mathcal{L}_{val}^j \leftarrow \mathcal{L}_{val}^{new}$ \;
}
}
}
}
Add $\hat{V}_j$ to $\mathbb{L}_{\hat{V}}$
}
\Return{the trained list of experts $\mathbb{L}_{\hat{V}}$}
\end{algorithm}
\DecMargin{1em}
}
\end{wrapfigure}

As the maximum total budget is $B_{max}$, $\hat{Q}$ has to effectively learn to estimate values from $B_{max}$ different distributions of instances, one for each possible budget in $[\![1, b_l^{max}]\!]$ for each $l \in [\![1, L]\!]$. But the instances in these distributions are not all equally hard to solve. Actually, the tendency is that the deeper in the sequence of decisions a distribution is, the easier to solve are the instances sampled from it.
For example, the last distribution $\mathbb{D}_{B_{max}}^*$ contains all the instances with a total remaining budget of at most $1$ that it is possible to obtain for the last move of the game when every previous action was optimal. The values of these instances can be computed exactly in polynomial time\footnote{Assuming the quantity $S$ is computable in polynomial time.} by checking the reward obtained with every possible action. Thus, if we had access to the $\{D_j^*\}_{j=1}^{B_{max}}$, then a natural curriculum would be to begin by sampling a dataset of instances $s^*_{B_{max}} \sim \mathbb{D}_{B_{max}}^*$, find their exact value in polynomial time, and train a neural network $\hat{V}$ on the couples $(s^*_{B_{max}},V^*(s^*_{B_{max}}))$. Once this is done, we could pass to the $s^*_{B_{max}-1} \sim \mathbb{D}_{B_{max}-1}^* = N_{L,\pi^*}^{3} \circ ... \circ N_{1,\pi^*}^{b_1^{max}} (\mathbb{D})$. As these instances have a total budget of at most $2$, we can heuristically solve them by generating every possible afterstate and, by using the freshly trained $\hat{V}$, take a greedy decision to obtain their approximate targets. In a bottom up approach, we could continue until $\hat{V}$ is trained on the $B_{max}$ different distributions. The challenge of this setting being that we do not know $\pi^*$ and hence, $\{\mathbb{D}_j^*\}_{j=1}^{B_{max}}$ are not available. To remedy this, we use a proxy, $\mathbb{D}_j^r$, obtained by following the random policy $a_t^r \sim \mathcal{U}(\mathcal{A}_t)$ for the sequence of previous moves, \ie, we use $N_{l,\pi^r}^{k}$ instead of $N_{l, \pi^*}^{k}$. \adel{Doing so still allows us to learn the values of the instances we actually care about \textit{(proof in \hyperref[LemmaAppendix2]{Appendix \ref{LemmaAppendix2}})}:}
\begin{lemma}
$\forall j \in [\![2,B_{max}]\!]$, $\mathrm{supp}(\mathbb{D}_j^*) \subseteq \mathrm{supp}(\mathbb{D}_j^r)$.
\end{lemma}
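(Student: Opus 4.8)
The plan is to prove the inclusion by induction on $j$, exploiting the fact that $\mathbb{D}_j^*$ and $\mathbb{D}_j^r$ are built from the \emph{same} ordered sequence of next-state maps, differing only in the policy that is plugged in. Concretely, I would write $\mathbb{D}_j^* = N_{l_j,\pi^*}^{k_j}(\mathbb{D}_{j-1}^*)$ and $\mathbb{D}_j^r = N_{l_j,\pi^r}^{k_j}(\mathbb{D}_{j-1}^r)$, where the pairs $(l_j,k_j)$ enumerate the decreasing-budget schedule read off from the composition in \hyperref[def_D]{(\ref{def_D})}, and $N_{l,\pi}^{k}$ denotes the (stochastic) pushforward induced by following policy $\pi$. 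The base case $j=1$ is immediate, since $\mathbb{D}_1^* = \mathbb{D}_1^r = \mathbb{D}$, so the supports coincide.

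For the inductive step, I would assume $\mathrm{supp}(\mathbb{D}_{j-1}^*) \subseteq \mathrm{supp}(\mathbb{D}_{j-1}^r)$ and fix $s \in \mathrm{supp}(\mathbb{D}_j^*)$. Then there is a state $s' \in \mathrm{supp}(\mathbb{D}_{j-1}^*)$ from which $s$ is reached with positive probability under $N_{l_j,\pi^*}^{k_j}$, and the induction hypothesis gives $s' \in \mathrm{supp}(\mathbb{D}_{j-1}^r)$. The key point is that $N_{l_j,\pi^*}^{k_j}$ and $N_{l_j,\pi^r}^{k_j}$ share an identical, policy-independent trigger condition on the budget tuple $\mathcal{B}_{s'}$, so they are simultaneously active or simultaneously the identity on $s'$. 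If $\mathcal{B}_{s'}$ does not equal $(0,\dots,0,k_j,b_{l_j+1},\dots,b_L)$, both maps reduce to the identity, so $s = s'$ and $s \in \mathrm{supp}(\mathbb{D}_j^r)$. Otherwise, $s = N(s',a^*)$ for some $a^*$ in the support of $\pi^*(\mathcal{A}_{s'})$, hence $a^* \in \mathcal{A}_{s'}$; since $\pi^r$ is uniform on $\mathcal{A}_{s'}$ its support is all of $\mathcal{A}_{s'}$, so $a^*$ is sampled with positive probability and $s$ lies in $\mathrm{supp}(N_{l_j,\pi^r}^{k_j}(\mathbb{D}_{j-1}^r)) = \mathrm{supp}(\mathbb{D}_j^r)$. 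This closes the induction.

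The argument carries essentially no analytic difficulty; the only substantive inclusion is $\mathrm{supp}(\pi^*(\mathcal{A}_t)) \subseteq \mathcal{A}_t = \mathrm{supp}(\mathcal{U}(\mathcal{A}_t))$, which holds because the optimal policy selects among admissible actions while the uniform policy charges every admissible action with positive probability. The part demanding the most care is therefore the bookkeeping rather than any inequality: I must verify that the two distributions are indeed generated by the identical schedule of maps, and that, because the trigger depends only on the (deterministically evolving) budget and not on the policy, those states in $\mathrm{supp}(\mathbb{D}_{j-1}^*)$ that are still ``waiting'' for their matching budget are passed through unchanged by both maps, so that the two cases above are genuinely exhaustive.
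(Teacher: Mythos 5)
Your proof is correct and rests on the same single substantive fact as the paper's: every action charged by $\pi^*$ lies in $\mathcal{A}_t$, on which the uniform policy $\pi^r$ puts positive mass, so positive probability propagates through the whole composition of next-state maps. The paper packages this as a chain-rule recurrence showing $P(a_0^r = a_0^*, \dots, a_t^r = a_t^*) > 0$ for an optimal action sequence rather than as an induction on the distribution index $j$ with your explicit trigger-condition case analysis, but the argument is essentially identical.
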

Thus, by learning the value of instances sampled from $\mathbb{D}_j^r$, we also learn values of instances from $\mathbb{D}_j^*$. To avoid the pitfall of catastrophic forgetting \cite{lange2019a} happening when a neural network switches of training distribution, each time it finishes to learn from a $\mathbb{D}_j^r$ and before the transition $j$ to $j-1$, we freeze a copy of $\hat{V}$ and save it in memory as an \textit{``expert of level $j$''}.
\adel{Hence, at level $l \in [\![1,L]\!]$ with budget $k_l \in [\![1,b_l^{max}]\!]$, we have access to a list of trained experts $\mathbb{L}_{\hat{V}}$ that can take decisions for each next step in the sequences, corresponding to instances with either a lower level $l' < l$ or a lower budget $k_{l'} < k_l$.
To train $\hat{V}$, we first sample a dataset $\{ s_0^{(i)}\}_{i=1}^N$ of instances from $\mathbb{D}$. Then, for each instance, we take a sequence of random decisions to arrive at level $l$ and budget at most $k_l$ by applying the operator $ N_{l, \pi^r}^{k_{l}+1} \circ ... \adel{ \circ N_{1,\pi^r}^{b_1^{max}-1}} \circ N_{1,\pi^r}^{b_1^{max}}$ to the $s_0^{(i)}$.
Finally, to retrieve the approximate targets of the subsequent instances that we will use to train the current $\hat{V}$, we generate every possible afterstate and use the previously trained  $\mathbb{L}_{\hat{V}}$.The complete procedure is formalized in \hyperref[MultiLC]{Algorithm \ref{MultiLC}}.}

\section{The Multilevel Critical Node Problem}\label{sec:MCN}

\begin{figure}[h]
    \centering
    \includegraphics[width=1.\columnwidth]{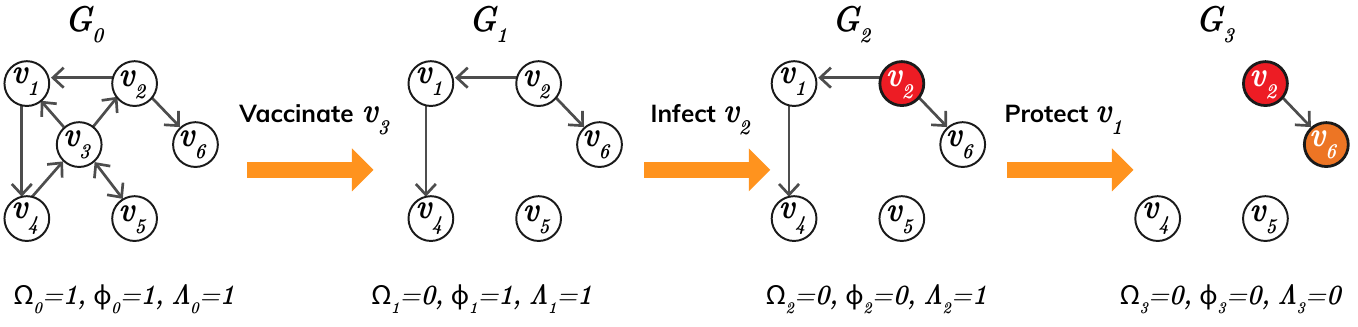}
    \caption{\footnotesize{Example of an MCN game on a directed graph with unitary weights and initial budgets $\Omega = \Phi = \Lambda = 1$. Here, $S=4$, $\{v_1,v_3,v_4, v_5\}$ are saved while $\{v_2, v_6\}$ are infected in the end.}}
    \label{ExampleMCN}
\end{figure}

The MCN \cite{Baggio2017MultilevelAF} is a trilevel budgeted combinatorial problem on a weighted graph $G=(V,A)$. The \textit{leader} is called the \textit{defender} and the \textit{follower} is the \textit{attacker}. The \textit{defender} begins by removing (vaccinate) a set $D$ of $\Omega$ nodes, then the \textit{attacker} labels a set $I$ of $\Phi$ nodes as attacked (infected), and finally the \textit{defender} removes (protects) a set $P$ of $\Lambda$ new nodes that were not attacked. Once all the moves are done, attacked nodes are the source of a cascade of infections that propagate through arcs from node to node. All nodes that are not infected are saved. As $D$ and $P$ were removed from the graph, they are automatically saved and the \textit{leader} receives $w(D)$ and $w(P)$, the weights of the nodes in $D$ and $P$, as reward when performing those actions. The quantity $S$ that the \textit{defender} seeks to maximize is thus the \textit{sum of the weights of the saved nodes in the end of the game}, while the \textit{attacker} aims to minimize it. An example of a game is presented in \hyperref[ExampleMCN]{Figure \ref{ExampleMCN}}. The problem can be written as:
\begin{equation}
    \max_{\begin{subarray}{c} D \subseteq V \\ |D| \leq \Omega \end{subarray}} \quad \min_{\begin{subarray}{c} I \subseteq V\backslash D \\ |I| \leq \Phi \end{subarray}} \quad \max_{\begin{subarray}{c} P \subseteq V\backslash (D \cup I) \\ |P| \leq \Lambda \end{subarray}} \quad S(G, D, I, P).
    \label{MCN}
\end{equation}
With unit weights, the MCN has been shown to be at least NP-hard on undirected graphs and at least $\Sigma_2^p$-hard on directed ones. In the more general version, with positive weights and costs associated to each node, the problem is $\Sigma_3^p$-complete ~\cite{UsComplexity}.

\section{Computational Results}\label{sec:results}

\paragraph{Instances.} We studied $3$ versions of the MCN: undirected with unit weights (MCN), undirected with positive weights (MCN$_{w}$), and directed with unit weights (MCN$_{dir}$). The first distribution of instances considered is $\mathbb{D}^{(1)}$, constituted of Erdos-Renyi graphs \cite{Erdos:1960} with size $\vert V \vert^{(1)} \in [\![10,23]\!]$ and arc density $d^{(1)} \in [0.1,0.2]$. For the weighted case, we considered integer weights $w\in [\![1,5]\!]$. The second distribution of instances $\mathbb{D}^{(2)}$ focused on larger graphs with $\vert V\vert^{(2)} \in [\![20,60]\!]$, $d^{(2)} \in [0.05,0.15]$. To compare our results with exact ones, we used the budgets reported in the experiments of the original MCN paper \cite{Baggio2017MultilevelAF}: $\Omega \in [\![0,3]\!]$, $\Phi \in [\![1,3]\!]$ and $\Lambda \in [\![0,3]\!]$.

\begin{figure}[h]
    \centering
    \includegraphics[width=1.\columnwidth]{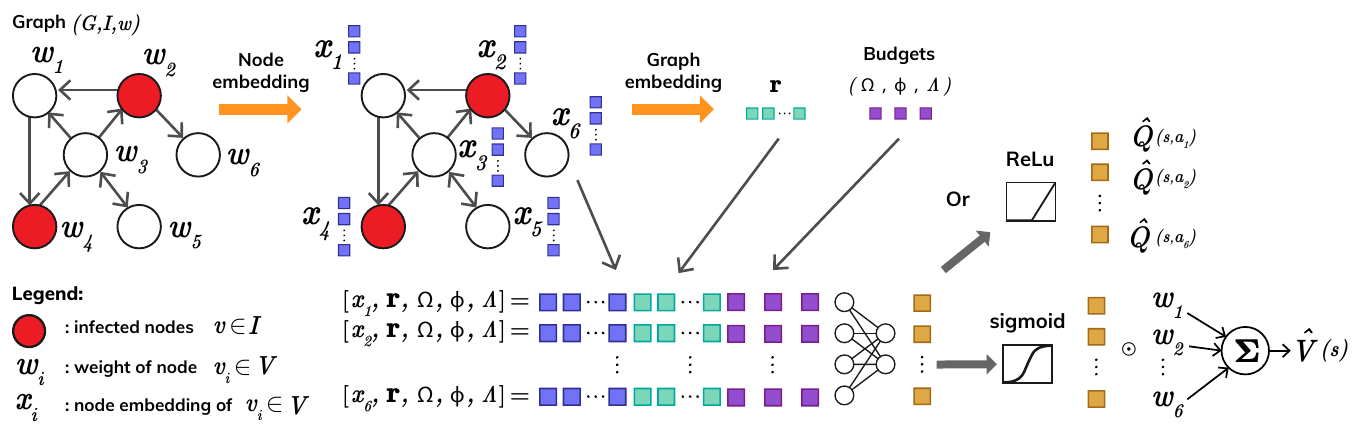}
    \caption{\footnotesize{Architecture of the two neural networks used: $\hat{V}$ and $\hat{Q}$. $\hat{V}$ computes a score $\in [0,1]$ for each node, which can be interpreted as its probability to be saved given the context (graph embedding and budgets).}}
    \label{ArchitectureNN}
\end{figure}

\paragraph{Graph embedding.} The architectures presented in \hyperref[ArchitectureNN]{Figure \ref{ArchitectureNN}} was implemented with Pytorch Geometric \cite{Fey/Lenssen/2019} and  Pytorch 1.4 \cite{Pytorch}. At the beginning, a node $v \in V$ has only $2$ features $(w_v, \mathds{1}_{v \in I})$: its weight and an indicator of whether it is attacked or not. The first step of our node embedding method is to concatenate the node's two features with its \textit{Local Degree Profile} \cite{cai2018a} consisting in $5$ simple statistics on its degree. Following the success of Attention on routing problems reported in \cite{kool2018attention}, we then apply their Attention Layer. As infected nodes are the ones in the same connected component as attacked ones in the graph, we sought to propagate the information of each node to all the others it is connected to. That way, the \textit{attacker} could know which nodes are already infected before spending the rest of his/her budget, and the \textit{defender} could realize which nodes are to protect in his/her last move. So, after the Attention Layers, we used an APPNP layer \cite{klicpera2018combining} that, given the matrix of nodes embedding $\mathbf{X}^{(0)}$, the adjacency matrix with inserted self-loops $\mathbf{\hat{A}}$, $\mathbf{\hat{D}}$ its corresponding diagonal degree matrix, and a coefficient $\alpha \in [0,1]$, recursively applies $K$ times:
\begin{equation}
    \mathbf{X}^{(k)} = (1-\alpha) \mathbf{\hat{D}}^{-1/2} \mathbf{\hat{A}} \mathbf{\hat{D}}^{-1/2} \mathbf{X}^{(k-1)} + \alpha \mathbf{X}^{(0)}.
\end{equation}
To achieve our goal, the value of $K$ must be at least equal to the size of the largest connected component possible to have in the distribution of instances $\mathbb{D}^{(i)}$. We thus used $K^{(i)}=\max(|V|^{(i)})$. Finally, the graph embedding method we used is the one presented in \cite{li2015gated}. Given two neural networks $h_{gate}$ and $h_r$ which compute, respectively, a score $\in \mathbb{R}$ and a projection to $\mathbb{R}^r$, the graph level representation vector it outputs is:
\begin{equation}
    \mathbf{r} = \sum_{i=1}^n \mathrm{softmax}(h_{gate}(\mathbf{x_i})) \odot h_r(\mathbf{x_i}).
\end{equation}
To train our agent and at inference, we used one gpu of a cluster of NVidia V100SXM2 with 16G of memory\footnote{We make our code publicly available: \url{https://github.com/AdelNabli/MCN}}. Further details of the implementation are discussed in \hyperref[ExpDetails]{Appendix \ref{ExpDetails}}. 

\paragraph{Algorithms.} We compared our algorithms on $\mathbb{D}^{(1)}$ and trained our best performing one on $\mathbb{D}^{(2)}$. But as it is, comparing \hyperref[MultiLDQN]{MultiL-DQN}  with \hyperref[MultiLC]{MultiL-Cur} may be unfair. Indeed,  \hyperref[MultiLDQN]{MultiL-DQN} uses a $Q$-network whereas \hyperref[MultiLC]{MultiL-Cur} uses a value network. The reason why we used $\hat{V}$ instead of $\hat{Q}$ in our second algorithm are twofold. First, as our curriculum leans on the abilities of experts trained on smaller budgets to create the next training dataset, computing \emph{values} of afterstates is necessary to heuristicaly solve instances with larger budgets. Second, as MCN is a game with one player removing nodes from the graph, symmetries can be leveraged in the afterstates. Indeed, given the graph $G'$ resulting of a node deletion, many couples $(G,v)$ of graph and node to delete could have resulted in $G'$. Thus, $\hat{Q}$ has to learn that all these possibilities are similar, while $\hat{V}$ only needs to learn the value of the shared afterstate, which is more efficient \cite{Sutton98}. To fairly compare the algorithms, we thus introduce \hyperref[MultiLMC]{MultiL-MC}, a version of \hyperref[MultiLDQN]{MultiL-DQN} based on a value network and using Monte-Carlo samples as in \hyperref[MultiLC]{MultiL-Cur}. Its pseudo-code is available in \hyperref[MultiLMCSection]{Appendix \ref{MultiLMCSection}}.
\begin{table}[h]
\begin{center}
\begin{small}
\caption{\footnotesize{Evolution during training of the loss on $8$ test sets of $1000$ exactly solved instances $\in \mathbb{D}^{(1)}$. Averaged on $3$ runs. We measured the loss on distributions arriving at different stages of the curriculum. The approximation ratio and optimality gap were measured after training and averaged over all the tests sets.}}
\label{Table1}
\vskip 0.10in
\resizebox{\columnwidth}{!}{
\begin{tabular}{cc|cc|cc}
\toprule
\multicolumn{2}{c}{MultiL-Cur} & \multicolumn{2}{c}{MultiL-MC} &\multicolumn{2}{c}{MultiL-DQN} \\
\multicolumn{2}{c}{
    \begin{subfigure}[t]{0.33\columnwidth}
        \centering
        \includegraphics[width=\columnwidth]{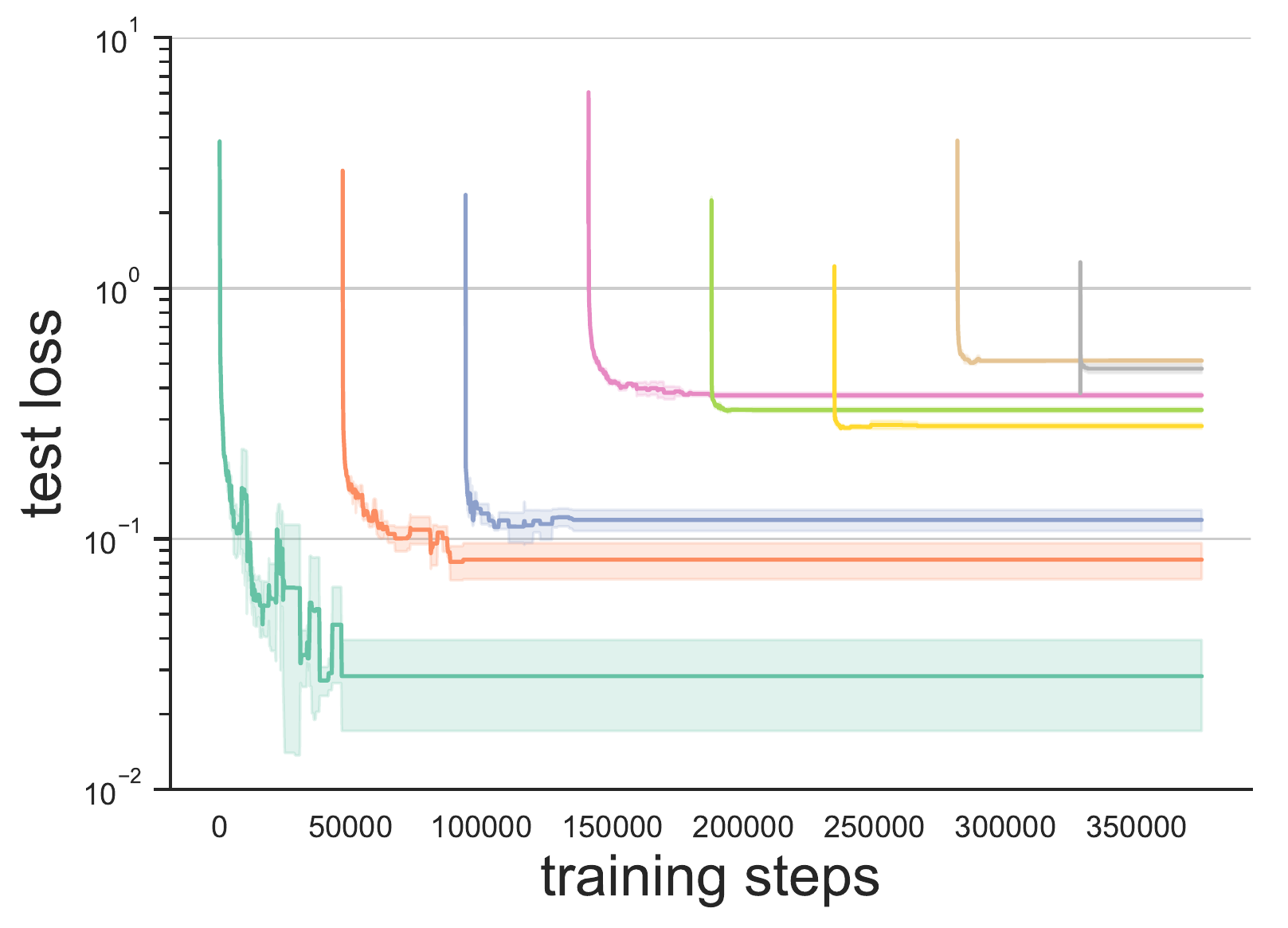}
    \end{subfigure}
    }
    & 
\multicolumn{2}{c}{
    \begin{subfigure}[t]{0.295\columnwidth}
        \centering
        \includegraphics[width=\columnwidth]{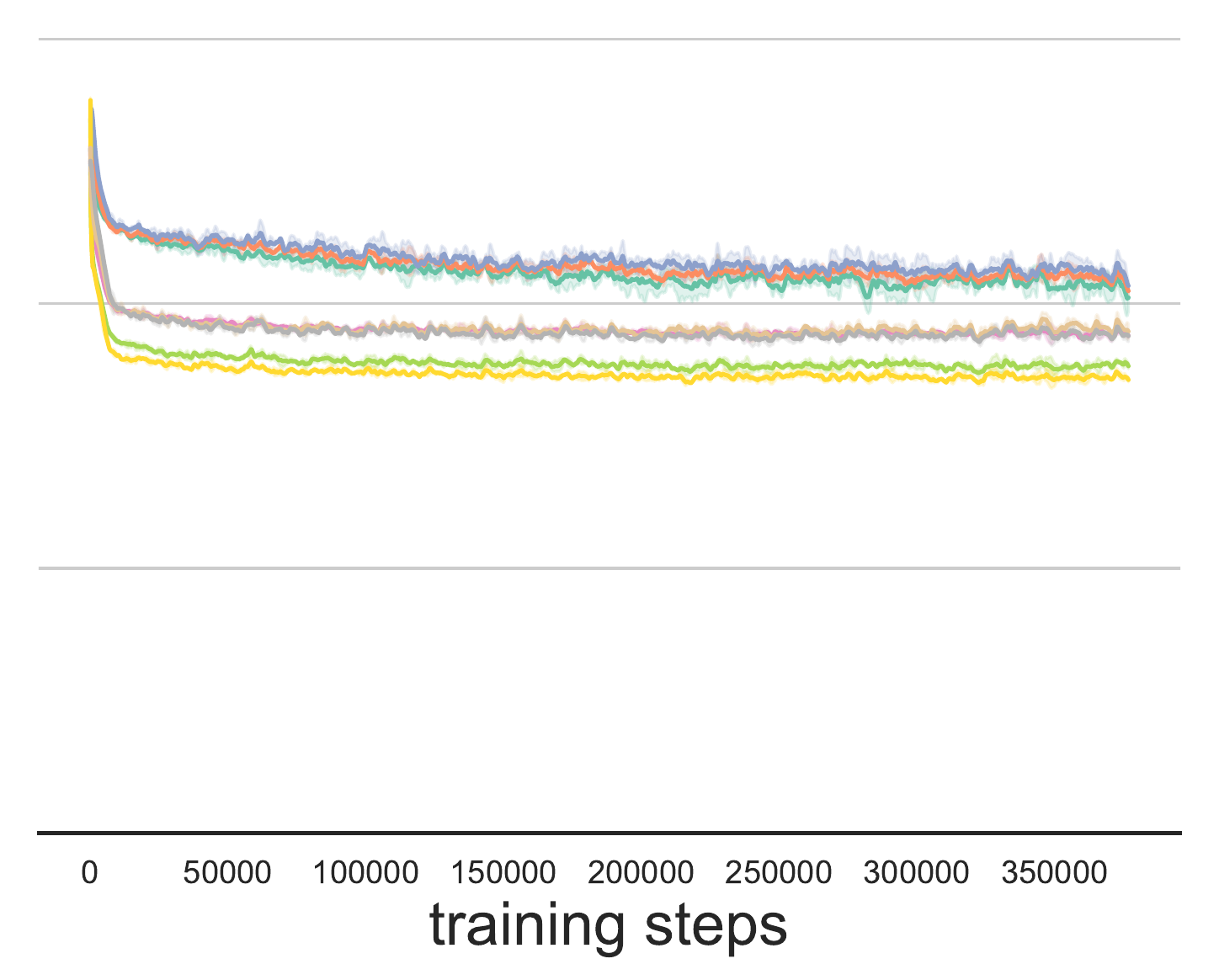}
    \end{subfigure}
    }
    & 
\multicolumn{2}{c}{
    \begin{subfigure}[t]{0.33\columnwidth}
        \centering
        \includegraphics[width=\columnwidth]{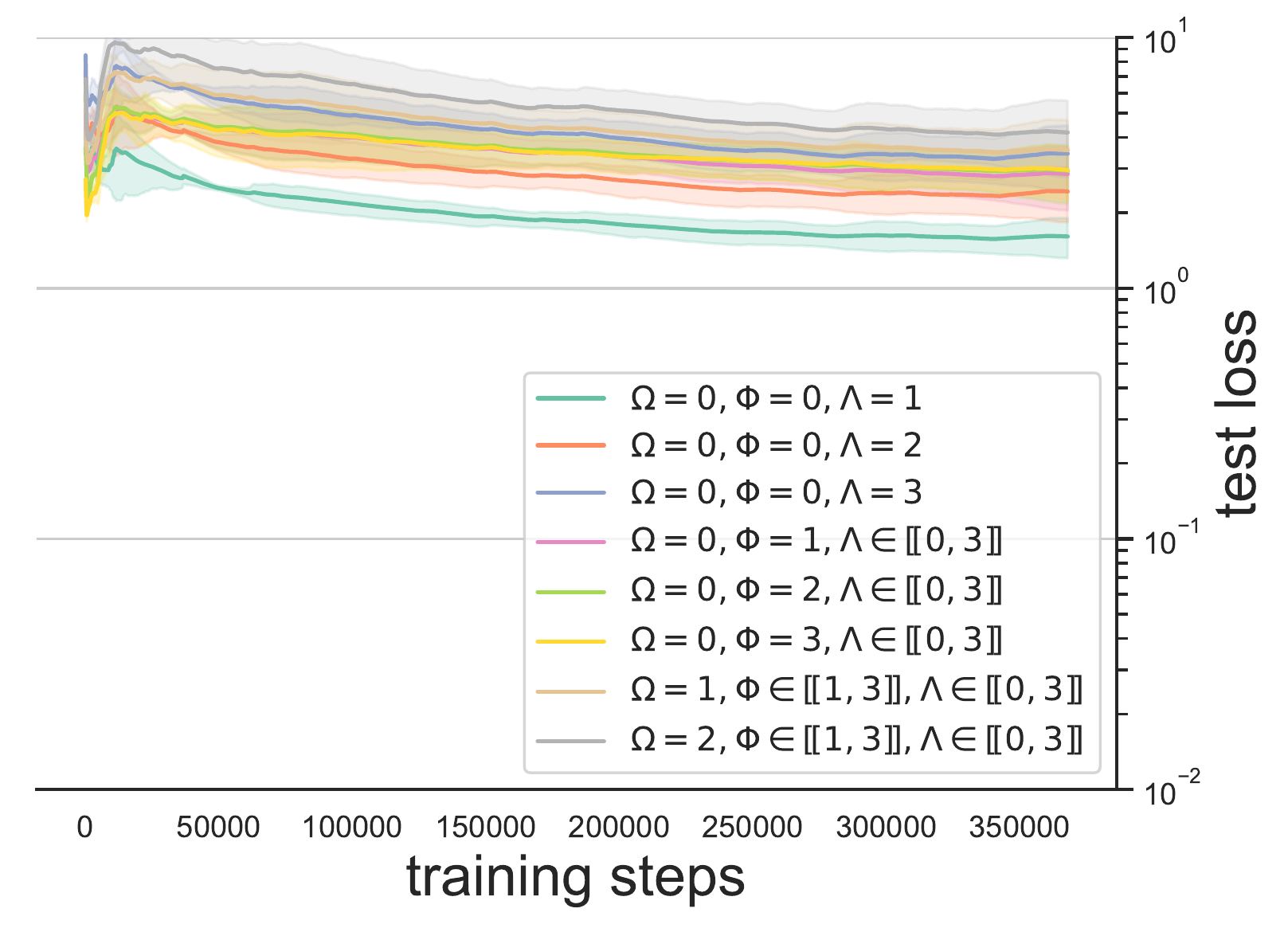}
    \end{subfigure} 
    }
    \\
     \textbf{opt gap $(\%)$} & \textbf{approx ratio}  & \textbf{opt gap $(\%)$} & \textbf{approx ratio} & \textbf{opt gap $(\%)$} & \textbf{approx ratio}\\
     \midrule 
     \textbf{0.51} & \textbf{1.006} & 3.34 & 1.036 & 14.53 & 1.158 \\
\bottomrule
\end{tabular} }
\end{small}
\end{center}
\end{table}
\paragraph{Baselines.} Results from \hyperref[Table1]{Table \ref{Table1}} indicate that \hyperref[MultiLC]{MultiL-Cur} is the best performing algorithm on $\mathbb{D}^{(1)}$.  Thus, we trained our learning agent with it on $\mathbb{D}^{(2)}$ and tested its performance on the datasets generated in \cite{Baggio2017MultilevelAF}. We compare the results with $2$ other heuristics: the random policy \textit{(for each instance, we average the value given by $10$ random episodes)}, and the DA-AD heuristic \cite{Baggio2017MultilevelAF}. The latter consists in separately solving the two bilevel problems inside MCN: $D$ is chosen by setting $\Lambda$ to $0$ and exactly solving the \textit{Defender-Attacker} problem, while $I$ and $P$ are determined by solving the subsequent \textit{Attacker-Defender} problem. The metrics we use are the optimality gap $\eta$ and the approximation ratio $\zeta$. Given $n_i$, the number of instances of a said type for which the optimal value $v^*$ is available, \(\eta =\frac{1}{n_i} \sum_{k=1}^{n_i} \frac{|v^*_k - \hat{v}_k|}{v^*_k}\) and \(\zeta = \frac{1}{n_i} \sum_{k=1}^{n_i} \max(\frac{v^*_k}{\hat{v}_k}, \frac{\hat{v}_k}{v^*_k})\). 
In \hyperref[Table2]{Table \ref{Table2}}, we report the inference times $t$ in seconds for our trained agents. The ones for the exact method and DA-AD are from \cite{Baggio2017MultilevelAF}. 
\begin{table}[h]
\begin{center}
\begin{footnotesize}
\caption{\footnotesize{Comparison between several heuristics and exact methods. Results on MCN are computed on the dataset of the original paper \cite{Baggio2017MultilevelAF}. For MCN$_{dir}$ and MCN$_w$, we generated our own datasets by making small adaptations to the exact solver of \cite{Baggio2017MultilevelAF} originally suited for MCN, see Appendix \ref{TweakExact} for details.} }
\label{Table2}
\vskip 0.10in
\resizebox{\columnwidth}{!}{
\begin{tabular}{c|c|cc|ccc|ccc|cc|cc}
\toprule
\multicolumn{1}{c|}{} & \multicolumn{9}{c|}{\textbf{MCN}} & \multicolumn{2}{c|}{\textbf{MCN$_{dir}$}} & \multicolumn{2}{c}{\textbf{MCN$_{w}$}} \\
\multicolumn{1}{c|}{}  & \multicolumn{1}{c|}{\textit{exact}} & \multicolumn{2}{c|}{\textit{random}} & \multicolumn{3}{c|}{\textit{DA-AD}} & \multicolumn{3}{c|}{\textit{cur}} & \multicolumn{2}{c|}{\textit{cur}} & \multicolumn{2}{c}{\textit{cur}}\\
$|V|$ & $t(s)$ & $\eta (\%)$ & $\zeta$ & $t(s)$ & $\eta (\%)$ & $\zeta$ & $t(s)$ & $\eta (\%)$ & $\zeta$ & $\eta (\%)$ & $\zeta$ & $\eta (\%)$ & $\zeta$ \\
\midrule
20 & 29 & 68 & 3.32 & 6 & \textbf{0.3} & \textbf{1.00} & \textbf{0.4} & 0.5 & 1.00 & 5.7 & 1.07 & 6.9 & 1.07\\
40  & 241 & 52 & 2.64 & 13 & 7.6 & 1.09 & \textbf{0.9} & \textbf{5.0} & \textbf{1.06} & 11.9 & 1.13 & 6.5 & 1.07\\
60 & 405 & 68 & 3.24 & 38 & 7.3 & 1.09 & \textbf{1.5} & \textbf{4.4} & \textbf{1.05} & 4.4 & 1.05 & 3.7 & 1.04\\
80  & 636 & 55 & 2.28 & 60 & 3.8 & 1.04 & \textbf{2.8} & \textbf{2.7} & \textbf{1.03} & 1.6 & 1.02 & 2.8 & 1.03\\
100 & 848 & 45 & 1.86 & 207 & \textbf{2.7} & \textbf{1.03} & \textbf{8.7} & 49.6 & 1.50 & 1.8 & 1.02 & 4.1 & 1.05 \\
\bottomrule
\end{tabular} }
\end{footnotesize}
\end{center}
\end{table}
\paragraph{Discussion.} Although the results in \hyperref[Table1]{Table \ref{Table1}} are the outcome of a total of $\sim900 000$ episodes and $\sim 350 000$ optimization steps for all $3$ algorithms, our experiments show that we can divide by $2$ the data and $4$ the number of steps without sacrificing much the results on $\mathbb{D}^{(1)}$ for the curriculum, which cannot be said of \hyperref[MultiLDQN]{MultiL-DQN} that is data hungry, see \hyperref[MoreResults]{Appendix \ref{MoreResults}} for details. The major drawback of \hyperref[MultiLC]{MultiL-Cur} is that it needs to compute all possible afterstates at each step of the rollout. This does not scale well with the graph's size: having $100$ nodes for the first step means that there are $100$ graphs of size $99$ to check. Thus, the curriculum we present is a promising step towards automatic design of heuristis, while opening new research directions on restricting the exploration of rollouts.

\hyperref[Table2]{Table \ref{Table2}} reveals that the results given by the \hyperref[MultiLC]{MultiL-Cur} algorithm are close to the optimum for a fraction of the time necessary to both DA-AD and the quickest exact solver known \textit{(MCN$^{MIX}$, presented in \cite{Baggio2017MultilevelAF})}. For the MCN instances, the jump in the metrics for graphs of size $100$ is due to one outlier among the $85$ exactly solved instances of this size. When removed, the values of $\eta$ and $\zeta$ drop to $17.8$ and $1.18$. The performances measured are consistent accross different problems as we also report low values of $\eta$ and $\zeta$ for MCN$_{dir}$ and MCN$_w$. The curriculum we devised is thus a robust and efficient way to train agents in a Multilevel Budgeted setting.

\section*{Conclusion}
\adel{In this paper, we proposed a new method for learning to solve Multilevel Budgeted Combinatorial problems by framing them in the Alternative Markov Game framework. To train our agent, we broke down the classical multilevel formulation into a sequence of unitary steps, allowing us to devise a curriculum leveraging the simple observation that the further down in the decision process a problem is, the lower its computational complexity is. Using a bottom up approach, we demonstrated that this learning strategy outperforms more classical RL algorithms on a trilevel problem, the MCN.} \\
\adel{This study also restates the difficulty of scaling up methods for multilevel optimization due to their theoretical and empirical complexity. This motivates the development of heuristics that can both serve as stand-alone methods or warm-start exact solvers. However, the one we devised is based on an afterstate value function, raising scalability issues. But, our work being the first looking to such problems, it sets the ground for less expensive curricula, which we leave as a future direction. } \\
\adel{Finally, we highlight the need for further study on the metrics necessary to properly report the optimality gaps in the multilevel setting. The main drawback of heuristics for multilevel optimization is on their evaluation: given a leader's strategy, its associated value can only be evaluated if the remaining levels are solved to optimality. This means that in opposition to single-level optimization, one must be very careful on the interpretation of the estimated reward associated with an heuristic method: we can be overestimating or underestimating it. In other words, it means that in the remaining levels, players might be failing to behave in an optimal way. Consequently, further research is necessary to provide guarantees on the quality of the obtained solution, namely, on dual bounds.}

\section*{Broader Impact}

\adel{We propose a general framework for training agents to tackle a class of Multilevel Budgeted Combinatorial problems. Such models are widely used in Economics and Operations Research. In this study, we particularly focused on the Multilevel Critical Node problem (MCN). Regarding the usefulness of such problem for practical scenarios, the MCN could fit on several applications, \eg~ to limit the fake news spread in social networks or in cyber security for the protection of a botnet against malware injections \cite{Baggio2017MultilevelAF}. Thus, this could represent a step towards the design of more robust networks, but could also be used to identify their critical weaknesses for malicious agents. We do not anticipate that our work will advantage or disadvantage any particular group.}

\begin{ack}
The authors  wish to thank the support of the Institut de valorisation des donn\'ees and Fonds de Recherche du Qu\'ebec through the FRQ–IVADO Research Chair in Data Science for Combinatorial Game Theory, and the Natural Sciences and Engineering Research Council of Canada through the discovery grant 2019-04557. This research was enabled in part by support provided by  Calcul Qu\'ebec (\url{www.calculquebec.ca}) and Compute Canada (\url{www.computecanada.ca}).

\end{ack}

{\small
\bibliographystyle{plain}
\bibliography{Biblio}

\begin{thebibliography}{10}

\bibitem{akiba2019optuna}
Takuya Akiba, Shotaro Sano, Toshihiko Yanase, Takeru Ohta, and Masanori Koyama.
\newblock Optuna: A next-generation hyperparameter optimization framework.
\newblock {\em arXiv preprint arXiv:1907.10902}, 2019.

\bibitem{Baggio2017MultilevelAF}
Andrea Baggio, Margarida Carvalho, Andrea Lodi, and Andrea Tramontani.
\newblock Multilevel approaches for the critical node problem.
\newblock {\em Operations Research}, To appear, 2020.

\bibitem{bai2020fast}
Yunsheng Bai, Derek Xu, Alex Wang, Ken Gu, Xueqing Wu, Agustin Marinovic,
  Christopher Ro, Yizhou Sun, and Wei Wang.
\newblock Fast detection of maximum common subgraph via deep q-learning, 2020.

\bibitem{Balcan2018}
Maria-Florina Balcan, Travis Dick, Tuomas Sandholm, and Ellen Vitercik.
\newblock Learning to branch.
\newblock In Jennifer Dy and Andreas Krause, editors, {\em Proceedings of the
  35th International Conference on Machine Learning}, volume~80 of {\em
  Proceedings of Machine Learning Research}, pages 344--353, Stockholmsmässan,
  Stockholm Sweden, 10--15 Jul 2018. PMLR.

\bibitem{Barrett2019ExploratoryCO}
Thomas~D. Barrett, William~R. Clements, Jakob~N. Foerster, and A.~I. Lvovsky.
\newblock Exploratory combinatorial optimization with reinforcement learning.
\newblock In {\em Proceedings of the 34th National Conference on Artificial
  Intelligence, AAAI}, 2020.

\bibitem{bello2016neural}
Irwan Bello, Hieu Pham, Quoc~V. Le, Mohammad Norouzi, and Samy Bengio.
\newblock Neural combinatorial optimization with reinforcement learning.
\newblock {\em arXiv preprint arXiv:1611.09940}, 2016.

\bibitem{bengio2018machine}
Yoshua Bengio, Andrea Lodi, and Antoine Prouvost.
\newblock Machine learning for combinatorial optimization: a methodological
  tour d'horizon, 2018.

\bibitem{BengioCur}
Yoshua Bengio, J\'{e}r\^{o}me Louradour, Ronan Collobert, and Jason Weston.
\newblock Curriculum learning.
\newblock In {\em Proceedings of the 26th Annual International Conference on
  Machine Learning}, ICML ’09, page 41–48, New York, NY, USA, 2009.
  Association for Computing Machinery.

\bibitem{Blair1992}
Charles Blair.
\newblock The computational complexity of multi-level linear programs.
\newblock {\em Annals of Operations Research}, 34(1):13--19, Dec 1992.

\bibitem{Bracken1973}
Jerome Bracken and James~T. McGill.
\newblock Mathematical programs with optimization problems in the constraints.
\newblock {\em Operations Research}, 21(1):37--44, 1973.

\bibitem{Brown2006}
Gerald Brown, Matt Carlyle, Javier Salmerón, and R.~Wood.
\newblock Defending critical infrastructure.
\newblock {\em Interfaces}, 36:530--544, 12 2006.

\bibitem{cai2018a}
Chen Cai and Yusu Wang.
\newblock A simple yet effective baseline for non-attributed graph
  classification.
\newblock {\em arXiv preprint arXiv:1811.03508}, 2018.

\bibitem{Candler77}
W.~Candler and R.~Norton.
\newblock Multilevel programming and development policy.
\newblock Technical Report 258, World Bank Development Research Center, 1977.

\bibitem{Cappart2018}
Quentin Cappart, Emmanuel Goutierre, David Bergman, and Louis-Martin Rousseau.
\newblock Improving optimization bounds using machine learning: Decision
  diagrams meet deep reinforcement learning.
\newblock In {\em AAAI}, 2019.

\bibitem{CarvalhoKEP2020}
Margarida Carvalho, Kristiian Glorie, Xenia Klimentova, Miguel Constantino, and
  Ana Viana.
\newblock Robust models for the kidney exchange problem.
\newblock {\em INFORMS Journal on Computing (to appear)}, 2020.

\bibitem{velickovic2018graph}
Petar~Veli\v ckovi\' c, Guillem Cucurull, Arantxa Casanova, Adriana Romero,
  Pietro Li\`o, and Yoshua Bengio.
\newblock Graph attention networks.
\newblock In {\em International Conference on Learning Representations}, 2018.

\bibitem{Dai2017}
Hanjun Dai, Elias Khalil, Yuyu Zhang, Bistra Dilkina, and Le~Song.
\newblock Learning combinatorial optimization algorithms over graphs.
\newblock In I.~Guyon, U.~V. Luxburg, S.~Bengio, H.~Wallach, R.~Fergus,
  S.~Vishwanathan, and R.~Garnett, editors, {\em Advances in Neural Information
  Processing Systems 30}, pages 6348--6358. Curran Associates, Inc., 2017.

\bibitem{DeNegre2011}
S.~DeNegre.
\newblock {\em Interdiction and discrete bilevel linear programming}.
\newblock PhD thesis, Lehigh University, 2011.

\bibitem{Dudas1998}
Tibor Dud{\'a}s, Bettina Klinz, and Gerhard~J. Woeginger.
\newblock The computational complexity of multi-level bottleneck programming
  problems.
\newblock In Athanasios Migdalas, Panos~M. Pardalos, and Peter V{\"a}rbrand,
  editors, {\em Multilevel Optimization: Algorithms and Application}, pages
  165--179, Boston, MA, 1998. Springer US.

\bibitem{Erdos:1960}
Paul Erdos and Alfred Renyi.
\newblock On the evolution of random graphs.
\newblock {\em Publ. Math. Inst. Hungary. Acad. Sci.}, 5:17--61, 1960.

\bibitem{KedadSidhoum2020}
Marc Etheve, Zacharie Alès, Côme Bissuel, Olivier Juan, and Safia
  Kedad-Sidhoum.
\newblock Reinforcement learning for variable selection in a branch and bound
  algorithm.
\newblock {\em arXiv preprint arXiv:2005.10026}, 2020.

\bibitem{fan2019}
Jianqing Fan, Zhaoran Wang, Yuchen Xie, and Zhuoran Yang.
\newblock A theoretical analysis of deep q-learning.
\newblock {\em arXiv preprint arXiv:1901.00137}, 2019.

\bibitem{Fey/Lenssen/2019}
Matthias Fey and Jan~E. Lenssen.
\newblock Fast graph representation learning with {PyTorch Geometric}.
\newblock In {\em ICLR Workshop on Representation Learning on Graphs and
  Manifolds}, 2019.

\bibitem{Fischetti2017}
Matteo Fischetti, Ivana Ljubić, Michele Monaci, and Markus Sinnl.
\newblock A new general-purpose algorithm for mixed-integer bilevel linear
  programs.
\newblock {\em Operations Research}, 65(6):1615--1637, 2017.

\bibitem{Fischetti2019interdiction}
Matteo Fischetti, Ivana Ljubić, Michele Monaci, and Markus Sinnl.
\newblock Interdiction games and monotonicity, with application to knapsack
  problems.
\newblock {\em INFORMS Journal on Computing}, 31(2):390--410, 2019.

\bibitem{FISCHETTI2018}
Matteo Fischetti, Michele Monaci, and Markus Sinnl.
\newblock A dynamic reformulation heuristic for generalized interdiction
  problems.
\newblock {\em European Journal of Operational Research}, 267(1):40 -- 51,
  2018.

\bibitem{florensa2017reverse}
Carlos Florensa, David Held, Markus Wulfmeier, Michael Zhang, and Pieter
  Abbeel.
\newblock Reverse curriculum generation for reinforcement learning.
\newblock volume~78 of {\em Proceedings of Machine Learning Research}, pages
  482--495. PMLR, 13--15 Nov 2017.

\bibitem{Forghani2020}
Asefe Forghani, Farzad Dehghanian, Majid Salari, and Yousef Ghiami.
\newblock A bi-level model and solution methods for partial interdiction
  problem on capacitated hierarchical facilities.
\newblock {\em Computers \& Operations Research}, 114:104831, 2020.

\bibitem{Furini2019CastingLO}
Fabio Furini, Ivana Ljubic, Enrico Malaguti, and Paolo Paronuzzi.
\newblock Casting light on the hidden bilevel combinatorial structure of the
  k-vertex separator problem.
\newblock In {\em OR-19-6, DEI, University of Bologna}, 2019.

\bibitem{GasseCFC019}
Maxime Gasse, Didier Chételat, Nicola Ferroni, Laurent Charlin, and Andrea
  Lodi.
\newblock Exact combinatorial optimization with graph convolutional neural
  networks.
\newblock In {\em NeurIPS}, pages 15554--15566, 2019.

\bibitem{bookApprox}
Teofilo~F. Gonzalez.
\newblock {\em Handbook of Approximation Algorithms and Metaheuristics (Chapman
  \& Hall/Crc Computer \& Information Science Series)}.
\newblock Chapman \& Hall/CRC, 2007.

\bibitem{Goodfellow-et-al-2016}
Ian Goodfellow, Yoshua Bengio, and Aaron Courville.
\newblock {\em Deep Learning}.
\newblock MIT Press, 2016.
\newblock \url{http://www.deeplearningbook.org}.

\bibitem{graves2017automated}
Alex Graves, Marc~G. Bellemare, Jacob Menick, R{\'e}mi Munos, and Koray
  Kavukcuoglu.
\newblock Automated curriculum learning for neural networks.
\newblock volume~70 of {\em Proceedings of Machine Learning Research}, pages
  1311--1320, International Convention Centre, Sydney, Australia, 06--11 Aug
  2017. PMLR.

\bibitem{SkipCon}
K.~{He}, X.~{Zhang}, S.~{Ren}, and J.~{Sun}.
\newblock Deep residual learning for image recognition.
\newblock In {\em 2016 IEEE Conference on Computer Vision and Pattern
  Recognition (CVPR)}, pages 770--778, 2016.

\bibitem{He2014}
X.~{He}, C.~{Li}, T.~{Huang}, C.~{Li}, and J.~{Huang}.
\newblock A recurrent neural network for solving bilevel linear programming
  problem.
\newblock {\em IEEE Transactions on Neural Networks and Learning Systems},
  25(4):824--830, 2014.

\bibitem{BatchNorm}
Sergey Ioffe and Christian Szegedy.
\newblock Batch normalization: Accelerating deep network training by reducing
  internal covariate shift.
\newblock In {\em Proceedings of the 32nd International Conference on
  International Conference on Machine Learning - Volume 37}, ICML’15, page
  448–456. JMLR.org, 2015.

\bibitem{ivanovic2018barc}
Boris Ivanovic, James Harrison, Apoorva Sharma, Mo~Chen, and Marco Pavone.
\newblock Barc: Backward reachability curriculum for robotic reinforcement
  learning.
\newblock {\em arXiv preprint arXiv:1806.06161}, 2018.

\bibitem{Jeroslow1985}
Robert~G. Jeroslow.
\newblock The polynomial hierarchy and a simple model for competitive analysis.
\newblock {\em Mathematical Programming}, 32(2):146--164, Jun 1985.

\bibitem{Kingma15}
Diederik~P. Kingma and Jimmy Ba.
\newblock Adam: {A} method for stochastic optimization.
\newblock In Yoshua Bengio and Yann LeCun, editors, {\em 3rd International
  Conference on Learning Representations, {ICLR} 2015, San Diego, CA, USA, May
  7-9, 2015, Conference Track Proceedings}, 2015.

\bibitem{klicpera2018combining}
Johannes Klicpera, Aleksandar Bojchevski, and Stephan Günnemann.
\newblock Combining neural networks with personalized pagerank for
  classification on graphs.
\newblock In {\em International Conference on Learning Representations}, 2019.

\bibitem{kool2018attention}
Wouter Kool, Herke van Hoof, and Max Welling.
\newblock Attention, learn to solve routing problems!
\newblock In {\em International Conference on Learning Representations}, 2019.

\bibitem{Lachhwani2017}
Kailash Lachhwani and Abhishek Dwivedi.
\newblock Bi-level and multi-level programming problems: Taxonomy of literature
  review and research issues.
\newblock {\em Archives of Computational Methods in Engineering}, 25, 04 2017.

\bibitem{lange2019a}
Matthias~De Lange, Rahaf Aljundi, Marc Masana, Sarah Parisot, Xu~Jia, Ales
  Leonardis, Gregory Slabaugh, and Tinne Tuytelaars.
\newblock A continual learning survey: Defying forgetting in classification
  tasks.
\newblock {\em arXiv preprint arXiv:1909.08383}, 2019.

\bibitem{li2015gated}
Yujia Li, Daniel Tarlow, Marc Brockschmidt, and Richard Zemel.
\newblock Gated graph sequence neural networks.
\newblock In {\em International Conference on Learning Representations}, 2016.

\bibitem{Li2018}
Zhuwen Li, Qifeng Chen, and Vladlen Koltun.
\newblock Combinatorial optimization with graph convolutional networks and
  guided tree search.
\newblock In S.~Bengio, H.~Wallach, H.~Larochelle, K.~Grauman, N.~Cesa-Bianchi,
  and R.~Garnett, editors, {\em Advances in Neural Information Processing
  Systems 31}, pages 539--548. Curran Associates, Inc., 2018.

\bibitem{Littman94}
Michael~L. Littman.
\newblock Markov games as a framework for multi-agent reinforcement learning.
\newblock In {\em Proceedings of the Eleventh International Conference on
  International Conference on Machine Learning}, ICML’94, page 157–163, San
  Francisco, CA, USA, 1994. Morgan Kaufmann Publishers Inc.

\bibitem{LittmanThesis}
Michael~L. Littman.
\newblock {\em Algorithms for Sequential Decision-Making}.
\newblock PhD thesis, Brown University, USA, 1996.

\bibitem{Littman96}
Michael~L. Littman and Csaba Szepesvari.
\newblock A generalized reinforcement-learning model: Convergence and
  applications.
\newblock Technical report, Brown University, USA, 1996.

\bibitem{Zarpellon2017}
Andrea Lodi and Giulia Zarpellon.
\newblock On learning and branching: a survey.
\newblock {\em TOP}, 25:207–236, 2017.

\bibitem{Lozano2017}
Leonardo Lozano and Jonathan~Cole Smith.
\newblock A backward sampling framework for interdiction problems with
  fortification.
\newblock {\em INFORMS J. Comput.}, 29:123--139, 2017.

\bibitem{ma2019combinatorial}
Qiang Ma, Suwen Ge, Danyang He, Darshan Thaker, and Iddo Drori.
\newblock Combinatorial optimization by graph pointer networks and hierarchical
  reinforcement learning.
\newblock {\em arXiv preprint arXiv:1911.04936}, 2019.

\bibitem{mazyavkina2020reinforcement}
Nina Mazyavkina, Sergey Sviridov, Sergei Ivanov, and Evgeny Burnaev.
\newblock Reinforcement learning for combinatorial optimization: A survey,
  2020.

\bibitem{mnih-dqn-2015}
Volodymyr Mnih, Koray Kavukcuoglu, David Silver, Andrei~A. Rusu, Joel Veness,
  Marc~G. Bellemare, Alex Graves, Martin Riedmiller, Andreas~K. Fidjeland,
  Georg Ostrovski, Stig Petersen, Charles Beattie, Amir Sadik, Ioannis
  Antonoglou, Helen King, Dharshan Kumaran, Daan Wierstra, Shane Legg, and
  Demis Hassabis.
\newblock Human-level control through deep reinforcement learning.
\newblock {\em Nature}, 518(7540):529--533, 02 2015.

\bibitem{UsComplexity}
Adel Nabli, Margarida Carvalho, and Pierre Hosteins.
\newblock Complexity of the multilevel critical node problem.
\newblock {\em arXiv preprint arXiv:2007.02370}, 2020.

\bibitem{Pytorch}
Adam Paszke, Sam Gross, Francisco Massa, Adam Lerer, James Bradbury, Gregory
  Chanan, Trevor Killeen, Zeming Lin, Natalia Gimelshein, Luca Antiga, Alban
  Desmaison, Andreas Kopf, Edward Yang, Zachary DeVito, Martin Raison, Alykhan
  Tejani, Sasank Chilamkurthy, Benoit Steiner, Lu~Fang, Junjie Bai, and Soumith
  Chintala.
\newblock Pytorch: An imperative style, high-performance deep learning library.
\newblock In H.~Wallach, H.~Larochelle, A.~Beygelzimer, F.~d~Alch\'{e}-Buc,
  E.~Fox, and R.~Garnett, editors, {\em Advances in Neural Information
  Processing Systems 32}, pages 8024--8035. Curran Associates, Inc., 2019.

\bibitem{Ratcliff75}
H.~Donald Ratliff, G.~Thomas Sicilia, and S.~H. Lubore.
\newblock Finding the n most vital links in flow networks.
\newblock {\em Management Science}, 21(5):531--539, 1975.

\bibitem{salimans2018learning}
Tim Salimans and Richard Chen.
\newblock Learning montezuma's revenge from a single demonstration.
\newblock {\em arXiv preprint arXiv:1812.03381}, 2018.

\bibitem{Shapley1095}
L.~S. Shapley.
\newblock Stochastic games.
\newblock {\em Proceedings of the National Academy of Sciences},
  39(10):1095--1100, 1953.

\bibitem{Shih2004}
Hsu-Shih Shih, Ue-Pyng Wen, S.~Lee, Kuen-Ming Lan, and Han-Chyi Hsiao.
\newblock A neural network approach to multiobjective and multilevel
  programming problems.
\newblock {\em Comput. Math. Appl.}, 48(1–2):95–108, July 2004.

\bibitem{SHOHAM07}
Yoav Shoham, Rob Powers, and Trond Grenager.
\newblock If multi-agent learning is the answer, what is the question?
\newblock {\em Artificial Intelligence}, 171(7):365 -- 377, 2007.
\newblock Foundations of Multi-Agent Learning.

\bibitem{Silver2017}
David Silver, Julian Schrittwieser, Karen Simonyan, Ioannis Antonoglou, Aja
  Huang, Arthur Guez, Thomas Hubert, Lucas Baker, Matthew Lai, Adrian Bolton,
  Yutian Chen, Timothy Lillicrap, Fan Hui, Laurent Sifre, George van~den
  Driessche, Thore Graepel, and Demis Hassabis.
\newblock Mastering the game of go without human knowledge.
\newblock {\em Nature}, 550(7676):354--359, Oct 2017.

\bibitem{Sinha2018}
Ankur Sinha, Pekka Malo, and Kalyanmoy Deb.
\newblock A review on bilevel optimization: From classical to evolutionary
  approaches and applications.
\newblock {\em IEEE Transactions on Evolutionary Computation}, 22(2):276--295,
  2018.

\bibitem{Dropout}
Nitish Srivastava, Geoffrey Hinton, Alex Krizhevsky, Ilya Sutskever, and Ruslan
  Salakhutdinov.
\newblock Dropout: A simple way to prevent neural networks from overfitting.
\newblock {\em J. Mach. Learn. Res.}, 15(1):1929–1958, January 2014.

\bibitem{Sutton98}
Richard~S. Sutton and Andrew~G. Barto.
\newblock {\em Introduction to Reinforcement Learning}.
\newblock MIT Press, Cambridge, MA, USA, 1st edition, 1998.

\bibitem{Tahernejad2020}
Sahar Tahernejad, Ted~K. Ralphs, and Scott~T. DeNegre.
\newblock A branch-and-cut algorithm for mixed integer bilevel linear
  optimization problems and its implementation, 2020.

\bibitem{Talbi2013}
El-Ghazali Talbi.
\newblock {\em Metaheuristics for Bi-level Optimization}.
\newblock Springer-Verlag Berlin Heidelberg, 2013.

\bibitem{TESAURO2002}
Gerald Tesauro.
\newblock Programming backgammon using self-teaching neural nets.
\newblock {\em Artificial Intelligence}, 134(1):181 -- 199, 2002.

\bibitem{Stackelberg1935MarktformUG}
Heinrich von Stackelberg.
\newblock {\em Marktform und Gleichgewicht}.
\newblock Springer-Verlag, Berlin, 1934.

\bibitem{Wu_2020}
Zonghan Wu, Shirui Pan, Fengwen Chen, Guodong Long, Chengqi Zhang, and
  Philip~S. Yu.
\newblock A comprehensive survey on graph neural networks.
\newblock {\em IEEE Transactions on Neural Networks and Learning Systems}, page
  1–21, 2020.

\bibitem{Zarpellon2020}
Giulia Zarpellon, Jason Jo, Andrea Lodi, and Yoshua Bengio.
\newblock Parameterizing branch-and-bound search trees to learn branching
  policies.
\newblock {\em arXiv preprint arXiv:2002.05120}, 2020.

\bibitem{Zhang2015}
Guangquan Zhang, Jie Lu, and Ya~Gao.
\newblock {\em Multi-Level Decision Making: Models, Methods and Applications}.
\newblock Springer-Verlag Berlin Heidelberg, 2015.

\end{thebibliography}
}
\newpage

\appendixpage
\begin{appendices}

\section{Proofs}

\begin{lemma} The Multilevel Budgeted Combinatorial optimization problem \hyperref[MBC]{(\ref{MBC})} is equivalent to:
\begin{equation*}
    \max_{a_1^1 \in \mathcal{A}_1} ... \max_{ a_1^{b_1} \in \mathcal{A}_1 \backslash \{a_1^1, .., a_1^{b_1 - 1} \}} \min_{a_2^1 \in \mathcal{A}_2} ... \max_{a_L^{b_L} \in \mathcal{A}_L \backslash \{a_L^1, .., a_L^{b_L - 1} \} } S(G, \{a_1^1, .., a_1^{b_1} \},..,\{a_L^1,.., a_L^{b_L} \}).
\end{equation*}
\label{LemmaAppendix1}
\end{lemma}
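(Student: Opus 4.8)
The plan is to isolate a single-level identity — that optimizing over a \emph{set} of $b_l$ simultaneous decisions equals optimizing over a \emph{sequence} of $b_l$ distinct decisions — and then compose it across the $L$ levels by backward induction.

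First I would set up the set formulation recursively. Define, by backward recursion on $l$, partial value functions of the already-fixed prefix $(A_1,\ldots,A_{l-1})$: set $V_{L+1}(A_1,\ldots,A_L) := S(G,A_1,\ldots,A_L)$, and for a maximizer level let $V_l(A_1,\ldots,A_{l-1}) := \max_{|A_l|\leq b_l} V_{l+1}(A_1,\ldots,A_l)$, with $\min$ in place of $\max$ at a minimizer level. Then the set formulation (MBC) equals $V_1$ evaluated on the empty prefix.

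The core step is the single-level identity: for every level $l$ and every fixed prefix,
\begin{equation*}
V_l(A_1,\ldots,A_{l-1}) = \operatorname*{opt}_{a_l^1\in\mathcal{A}_l}\ \cdots\ \operatorname*{opt}_{a_l^{b_l}\in\mathcal{A}_l\setminus\{a_l^1,\ldots,a_l^{b_l-1}\}} V_{l+1}\big(A_1,\ldots,A_{l-1},\{a_l^1,\ldots,a_l^{b_l}\}\big),
\end{equation*}
where $\operatorname*{opt}$ is $\max$ or $\min$ according to the parity of $l$. Two observations establish it. \emph{Order-invariance}: $V_{l+1}$ depends on $A_l$ only through the unordered set, so optimizing $V_{l+1}$ over ordered tuples of $b_l$ distinct actions — which is exactly what the nested $\operatorname*{opt}$'s range over — returns the same value as optimizing over subsets of size exactly $b_l$, each subset being represented by $b_l!$ tuples. \emph{No incentive to pass}: by the assumption in the problem statement that an extra admissible action never worsens the acting player's optimal continuation value, the optimum over $|A_l|\leq b_l$ is attained at $|A_l|=b_l$, so optimizing over size-$\leq b_l$ subsets coincides with optimizing over size-$b_l$ subsets. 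Chaining the two yields the identity.

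I would then conclude by backward induction on $l$ from $L+1$ down to $1$: assuming $V_{l+1}$ is already written in fully expanded sequence form for levels $l+1,\ldots,L$, substituting this into the single-level identity expresses $V_l$ in fully expanded sequence form for levels $l,\ldots,L$. The base case is $V_{L+1}=S$, and the case $l=1$ is exactly the claimed formula. The main obstacle is conceptual rather than computational: the no-pass monotonicity must be applied to the \emph{optimized continuation} $V_{l+1}$ rather than to the raw objective $S$, and consistently at both $\max$ and $\min$ levels; and the single-level identity must be verified \emph{uniformly} over the frozen prefix so that each substitution leaves the surrounding nested optimizations untouched. The residual combinatorics — the $b_l!$-to-one correspondence between ordered distinct tuples and subsets — is routine.
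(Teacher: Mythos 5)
Your proof is correct and follows essentially the same route as the paper's: the paper peels off one action at a time via the recursion $\max_{|A_1|\le b_1} = \max_{a_1^1\in\mathcal{A}_1}\max_{|A'_1|\le b_1-1}$ and applies it through all levels, which amounts to exactly your combination of order-invariance and the no-pass assumption. If anything, your write-up is more careful than the paper's two-line argument, since you make explicit both the ordered-tuple-to-subset correspondence and the point that no-pass must be invoked on the optimized continuation $V_{l+1}$ to pass from $|A_l|\le b_l$ to $|A_l|=b_l$.
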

\begin{proof}
We immediately have the following relation: \[ 
    \max_{|{A}_1| \leq b_1} ...  \max_{|{A}_L| \leq b_L} S(G, {A}_1, {A}_2, .., {A}_L) = \max_{ a_1^1 \in \mathcal{A}_1} \max_{ |{A'}_1| \leq b_1-1} ... \max_{|{A}_L|\leq b_L}  \;S(G, \{a_1\} \cup {A'}_1, {A}_2, .., {A}_L) \] As the same reasoning holds with $\min$, we can apply it recursively, which closes the proof.
\end{proof}

\begin{lemma}
$\forall j \in [2,B_{max}]$, $\mathrm{supp}(\mathbb{D}_j^*) \subseteq \mathrm{supp}(\mathbb{D}_j^r)$
\label{LemmaAppendix2}
\end{lemma}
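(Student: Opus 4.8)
The plan is to proceed by induction on $j$, exploiting that $\mathbb{D}_j^*$ and $\mathbb{D}_j^r$ are obtained from the \emph{same} base distribution $\mathbb{D}$ by applying the \emph{same} ordered sequence of next-state operators, the sole difference being the policy that selects the action fed into $N$: for every index, $\mathbb{D}_{j}^\pi = N_{l,\pi}^{k}(\mathbb{D}_{j-1}^\pi)$ with $\pi \in \{\pi^*, \pi^r\}$ and $(l,k)$ the level/budget pair attached to the $(j-1)$-th operator in the composition. The base case is immediate since $\mathbb{D}_1^* = \mathbb{D} = \mathbb{D}_1^r$, so the supports coincide.

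The single structural fact I would isolate first is the description of the support of a push-forward by $N_{l,\pi}^{k}$. Since the environment is deterministic, $N(s,a)$ is a single afterstate, so the support of $N_{l,\pi}^{k}(\mu)$ consists of the afterstates $N(s,a)$ ranging over states $s\in\mathrm{supp}(\mu)$ whose budget matches the trigger pattern $(0,\dots,0,k,b_{l+1},\dots,b_L)$ and over actions $a\in\mathrm{supp}(\pi(\mathcal{A}_s))$, together with every $s\in\mathrm{supp}(\mu)$ whose budget does \emph{not} match the pattern, on which the operator acts as the identity. The crux is then the one-line observation that $\pi^r=\mathcal{U}(\mathcal{A}_s)$ is uniform, so $\mathrm{supp}(\pi^r(\mathcal{A}_s))=\mathcal{A}_s$, whereas an optimal policy only ever selects feasible actions, giving $\mathrm{supp}(\pi^*(\mathcal{A}_s))\subseteq\mathcal{A}_s=\mathrm{supp}(\pi^r(\mathcal{A}_s))$.

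The inductive step is then a set-inclusion chase. Given $s'\in\mathrm{supp}(\mathbb{D}_{j}^*)$, either $s'=N(s,a^*)$ for some $s\in\mathrm{supp}(\mathbb{D}_{j-1}^*)$ of matching budget and some optimal $a^*\in\mathrm{supp}(\pi^*(\mathcal{A}_s))$, or $s'=s$ lies on the identity branch. The induction hypothesis gives $s\in\mathrm{supp}(\mathbb{D}_{j-1}^r)$; in the first case $a^*\in\mathcal{A}_s=\mathrm{supp}(\pi^r(\mathcal{A}_s))$ forces $s'=N(s,a^*)\in\mathrm{supp}(\mathbb{D}_{j}^r)$, and in the second case $s'=s$ is likewise fixed by $N_{l,\pi^r}^{k}$, so again $s'\in\mathrm{supp}(\mathbb{D}_{j}^r)$. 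This propagates the inclusion from $j-1$ to $j$ and yields the statement for all $j\in[\![2,B_{max}]\!]$.

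The argument is essentially bookkeeping; the main point needing care — and the part I would state most explicitly — is the push-forward support identity together with the correct handling of the identity branch, because these operators act on mixed distributions that contain states sitting at several different budget levels simultaneously. Once that is pinned down, the trivial inclusion $\mathrm{supp}(\pi^*(\mathcal{A}_s))\subseteq\mathrm{supp}(\pi^r(\mathcal{A}_s))$ carries the entire proof.
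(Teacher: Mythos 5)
Your proposal is correct and follows essentially the same route as the paper's proof: both are an induction along the sequence of decisions whose engine is the observation that optimal actions form a (nonempty) subset of the feasible actions $\mathcal{A}_t$, all of which lie in the support of the uniform policy, so every optimal trajectory is generated with positive probability by the random rollout. The paper phrases this as a chain-rule recurrence on $P(a_0^r=a_0^*,\dots,a_t^r=a_t^*)>0$ rather than as a push-forward support identity, and it additionally invokes the no-incentive-to-pass assumption to guarantee $\mathcal{A}_t\neq\emptyset\implies\mathcal{A}_t^*\neq\emptyset$, a small point worth making explicit in your version as well.
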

\begin{proof}
For all $s_0 \sim \mathbb{D}$, for all $t \in [\![0,B-1]\!]$, we define $\mathcal{A}_t^*(a_0, ..., a_{t-1}) \subseteq \mathcal{A}_t(a_0,...,a_{t-1})$ as the set of optimal actions at time $t$ in state $s_t$ for the player $p(s_t)$, where we made evident the dependence of $s_t$ on previous actions. As by assumption we consider games where players can only improve their objective by taking a decision, we have that $\forall t, \; \mathcal{A}_t \neq \emptyset \implies \mathcal{A}^*_t \neq \emptyset$. For a given $s_t$ and subsequent $\mathcal{A}_t$, recall that $a^r_t$ is defined as a random variable with values in $\mathcal{A}_t$ and following the uniform law. Given $s_0 \sim \mathbb{D}$, we take $(a_0^*, ..., a_{B-1}^*) \in \mathcal{A}_0^* \times ... \times \mathcal{A}_{B-1}^*(a_0^*,...,a_{B-2}^*)$, one of the possible sequence of optimal decisions. Then, using the chain rule, it is easy to show by recurrence that $ \forall t \in [\![0, B-1]\!], \; P(a_0^r = a_0^*, ..., a_t^r=a_t^*) > 0$. In words, every optimal sequence of  decisions is generated with a strictly positive probability.
\end{proof}

\section{Algorithms}

\subsection{MultiL-DQN}
\label{MultiLDQNSection}

As the player currently playing is completely determined from $s_t$, we can use the same neural network $\hat{Q}$ to estimate all the state-action values, regardless of the player. We call $B_t$ the sum of all the budgets in $\mathcal{B}_t$ such that an episode stops when $B_t = 0$.

{\footnotesize
\IncMargin{1em}
\begin{algorithm}[H]
\label{MultiLDQN}
\caption{MultiL-DQN}
\SetAlgoLined
Initialize the replay memory $\mathcal{M}$ to capacity $\mathcal{C}$ \;
Initialize the $Q$-network $\hat{Q}$ with weights $\hat{\theta}$ \;
Initialize the target-network $\tilde{Q}$ with weights $\tilde{\theta}=\hat{\theta}$ \;
\For{episode $e=1, ..., E$}{
Sample $s_0 = (G_0, \mathcal{B}_0) \sim \mathbb{D}$ \;
$t \leftarrow 0$ \;
\While{$B_t \geq 1$}{
$a_t = \left\{
          \begin{array}{ll}
           \textrm{random action } a_t \in \mathcal{A}_t & \mathrm{w.p.}\; \epsilon \\
            \arg \max_{a_t \in \mathcal{A}_t} \hat{Q}(s_t, a_t) & \mathrm{otherwise} \; \mathrm{if} \; p(s_t)=1 \\
            \arg \min_{a_t \in \mathcal{A}_t} \hat{Q}(s_t, a_t) & \mathrm{otherwise} \; \mathrm{if} \; p(s_t)=2
          \end{array}
        \right.$ \;
$s_{t+1} \leftarrow N(s_t, a_t)$ \;
$t \leftarrow t+1$ \;
\If{$t \geq 1$}{
Add $(s_{t-1}, a_{t-1}, R(s_{t-1}, a_{t-1}), s_t)$ to $\mathcal{M}$ \;
Sample a random batch $\{(s_i, a_i, r_i, s_i')\}_{i=1}^m \overset{i.i.d}{\sim} \mathcal{M}$ \;
\For{$i=1,..,m$}{
$y_i = r_i + \mathds{1}_{p(s_i')=1} \max_{a' \in \mathcal{A'}} \tilde{Q}(s_i', a') + \mathds{1}_{p(s_i')=2} \min_{a' \in \mathcal{A'}} \tilde{Q}(s_i', a')$
}
Update $\hat{\theta}$ over $\frac{1}{m} \sum_{i=1}^m \left(y_i - \hat{Q}(s_i, a_i)\right)^2$ with Adam \cite{Kingma15} \;
Update $\tilde{\theta} \leftarrow \hat{\theta}$ every $T_{target}$ steps
}
}
}
\Return{the trained $Q$-network $\hat{Q}$}
\end{algorithm}
\DecMargin{1em}
}

\subsection{Greedy Rollout}
\label{GreedyRollout}
{\footnotesize
\IncMargin{1em}
\begin{algorithm}[H]
\caption{Greedy Rollout}
\SetAlgoLined
\SetKwInOut{Input}{Input}
\Input{A state $s_t$ with total budget $B_t$ and a list of experts value networks $\mathbb{L}_{\hat{V}}$}
Initialize the value $\hat{v} \leftarrow 0$ \;
\While{$B_t \geq 1$}{
Retrieve the expert of the next level $\hat{V}_{t+1}$ from the list $\mathbb{L}_{\hat{V}}$ \;
Generate every possible afterstate $\mathcal{S}_t'\leftarrow \{N(s_t, a_t)\}_{a_t \in \mathcal{A}_t}$ \;
$s_{t+1} = \left\{
          \begin{array}{ll}
            \arg \max_{s' \in \mathcal{S}_t'} \hat{V}_{t+1}(s') & \mathrm{if} \; p(s_t)=1 \\
            \arg \min_{s' \in \mathcal{S}_t'} \hat{V}_{t+1}(s') & \mathrm{if} \; p(s_t)=2
          \end{array}
        \right.$ \;
$\hat{v} \leftarrow \hat{v} +  R(s_t, s_{t+1}) $ \;
$t \leftarrow  {t+1}$ \;
}
\Return{the value $\hat{v}$}
\end{algorithm}
\DecMargin{1em}
}

\subsection{MultiL-MC}
\label{MultiLMCSection}
As we use Monte-Carlo samples as targets, the values of the targets sampled from the replay memory $\mathcal{M}$ is not dependent on the current expert as in DQN \cite{mnih-dqn-2015} but on a previous version of $\hat{V}$, which can become outdated quickly. Thus, to easily control the number of times an old estimate is used, we decided to perform an epoch on the memory every time $m$ new samples were pushed, and used a capacity $\mathcal{C} = k \times m$ so that the total number of times a Monte-Carlo sample is seen is directly $k$.

{\footnotesize
\IncMargin{1em}
\begin{algorithm}[H]
\caption{MultiL-MC}
\label{MultiLMC}
\SetAlgoLined
Initialize the replay memory $\mathcal{M}$ to capacity $\mathcal{C}$ \;
Initialize the value-network $\hat{V}$ with weights $\hat{\theta}$ \;
\For{episode $e=1, ..., E$}{
Sample $s_0 = (G_0, \mathcal{B}_0) \sim \mathbb{D}$ \;
Initialize the memory of the episode $\mathcal{M}_e$ to be empty\;
Initialize the length of the episode $T \leftarrow 0$ \;
\While(\tcp*[f]{perform a Monte Carlo sample}){$B_t \geq 1$}{
$a_t = \left\{
          \begin{array}{ll}
           \textrm{random action } a_t \in \mathcal{A}_t & \mathrm{w.p.}\; \epsilon \\
            \arg \max_{a_t \in \mathcal{A}_t} \hat{V}(N(s_t, a_t)) & \mathrm{otherwise} \; \mathrm{if} \; p(s_t)=1 \\
            \arg \min_{a_t \in \mathcal{A}_t} \hat{V}(N(s_t, a_t)) & \mathrm{otherwise} \; \mathrm{if} \; p(s_t)=2
          \end{array}
        \right.$ \;
$s_{t+1} = N(s_t, a_t)$ \;
Add $(s_{t}, R(s_t, a_t))$ to $\mathcal{M}_e$ \;
$T \leftarrow T + 1$
}
Initialize the target $y_T \leftarrow 0$ \;
\For(\tcp*[f]{associate each state to its value}){$t=1,..., T$}{
Recover $(s_{T-t}, R(s_{T-t}, a_{T-t}))$ from $\mathcal{M}_e$ \;
$y_{T-t} \leftarrow y_{T-t+1} + R(s_{T-t}, a_{T-t})$ \;
Add $(s_{T-t}, y_{T-t})$ to $\mathcal{M}$}
\If{there are more than $m$ new couples in $\mathcal{M}$}{
Create a random permutation $\sigma \in \mathcal{S}_N$ \;
\For(\tcp*[f]{perform an epoch on the memory}){batches $\{(s_i, y_i)\}_{i=1}^m \sim \sigma(\mathcal{M})$}{
Update $\hat{\theta}$ over the loss $\frac{1}{m} \sum_{i=1}^m \left(y_i - \hat{V}(s_i)\right)^2$ with Adam \cite{Kingma15}
}
}
}
\Return{the trained value-network $\hat{V}$}
\end{algorithm}
\DecMargin{1em}
}

\section{Broadening the scope of the exact algorithm}
\label{TweakExact}
In order to constitute a test set to compare the results given by our heuristics to exact ones, we used the exact method described in \cite{Baggio2017MultilevelAF} to solve a small amount of instances. The algorithm they described was thought for the MCN problem, but is directly applicable without change on MCN$_{dir}$. However, in order to monitor the learning at each stage of the curriculum for MCN as in \hyperref[Table1]{Table \ref{Table1}}, there is a need to solve instances where node infections were already performed in the sequence of previous moves but there is still some budget left to spend for the attacker, which is not possible as it is in \cite{Baggio2017MultilevelAF}. Moreover, small changes need to be made in order to solve instances of MCN$_w$.

\subsection{Adding nodes that are already infected}

We denote by $J$ the set of nodes that are already infected at the attack stage and $\beta_v = \mathds{1}_{v \in J}$ the indicator of whether node $v$ is in $J$ or not. Then, the total set of infected nodes after the attacker spend his/her remaining budget $\Lambda$ and infect new nodes $I$ is $J \cup I$. In order to find $I$, we use the AP algorithm of \cite{Baggio2017MultilevelAF}, with the following modification to the rlxAP optimization problem:
\begin{alignat*}{4}
     \min && \quad  \Lambda p + \sum_{v \in V} \gamma_v & && \quad \\
     && \sum_{v \in V} y_v  & \leq \Lambda && \quad \\
     && \blue{y_v} & \blue{\leq 1-\beta_v} && \quad \blue{\forall v \in V} \\
     && h_v + \sum_{(u,v) \in A} q(u,v) - \sum_{(u,v) \in A} q(v,u) & \geq 1 && \quad \forall v \in V \\
     && p - \sum_{(u,v) \in A} q(u,v) & \geq 0  && \quad \forall v \in V \\
     && \gamma_v + |V| y_v - h_v & \geq \blue{-|V| \beta_v} && \quad \forall v \in V \\
     && p, \; h_v, \; \gamma_v, \; q(u,v) & \geq 0 && \quad \forall v \in V, \; \forall (u,v) \in A \\
     && y_v & \in \{0,1\} && \quad \forall v \in V \\
\end{alignat*}
We indicated changes in \blue{blue}. The notations for the variables being the ones from \cite{Baggio2017MultilevelAF}.
\subsection{Adding weights}
Taking the weights $w_v$ of the nodes $v \in V$ into account in the optimization problems is even more straightforward. As the criterion to optimize is no longer the \emph{number} of saved nodes but the \emph{sum of their weights}, each time a cardinal of a set appears in the algorithms AP and MCN in \cite{Baggio2017MultilevelAF}, we replace it by the the sum of the weights of its elements. As for the optimization problems that are solved during the routines, we replace, in the Defender problem and in the $1$lvlMIP:
\begin{equation*}
    \sum_{v \in V} \alpha_v \longrightarrow \sum_{v \in V} \blue{w_v} \alpha_v
\end{equation*}
and in the rlxAP problem:
\begin{equation*}
     h_v + \sum_{(u,v) \in A} q(u,v) - \sum_{(u,v) \in A} q(v,u)  \geq 1 \longrightarrow h_v + \sum_{(u,v) \in A} q(u,v) - \sum_{(u,v) \in A} q(v,u)  \geq \blue{w_v}
\end{equation*}
\section{Experiments details}
\label{ExpDetails}
\subsection{Architecture details}
\paragraph{Nodes embedding .}
The first step of the method described in \hyperref[ArchitectureNN]{Figure \ref{ArchitectureNN}} is the node embedding part. Each node $v \in V$ begins with two features $\mathbf{x_v} = (w_v, \mathds{1}_{v \in I})$: its weight and an indicator of whether it is attacked or not. First, we normalize the weights by dividing them with the sum of the weights in the graph such that each $w_v \in [0,1]$. We extend the two features with the \textit{Local Degree Profile} of each node \cite{cai2018a}, which consists in $5$ features on the degree:
\begin{equation}
    \mathbf{x_v} = \mathbf{x_v} || (\mathrm{deg}(v),\min(DN(v)),\max(DN(v)),\mathrm{mean}(DN(v)),\mathrm{std}(DN(v)))
\end{equation}
with $\mathrm{deg}$ the degree of a node, and $DN$ the degrees of $\mathcal{N}(v)$ - the neighbors of $v$ in the graph. Then, we project our features $\mathbf{x_v} \in \mathbb{R}^{7}$ into $\mathbb{R}^{d_{e}}$ with a linear layer. After that, we replicated the Multihead Attention Layer described in Kool \etal \cite{kool2018attention} using a Graph Attention Network (GAT) \cite{velickovic2018graph}. Thus, we apply one GAT layer such that:
\begin{equation}
    \mathbf{x_v}' = \mu_{v,v} \mathbf{\Theta} \mathbf{x_v} + \sum_{u \in \mathcal{N}(v)} \mu_{v,u}\mathbf{\Theta} \mathbf{x_u}
    \label{eqGAT}
\end{equation}
with $\mu$ defined by:
\begin{equation}
    \mu_{v, u} = \frac{\exp(\mathrm{LeakyReLU}(\mathbf{a}^\top [\mathbf{\Theta} \mathbf{x_v}||\mathbf{\Theta} \mathbf{x_u}]))}{\sum_{k \in \mathcal{N}(v) \cup \{v\}} \exp(\mathrm{LeakyReLU}(\mathbf{a}^\top [\mathbf{\Theta} \mathbf{x_v}||\mathbf{\Theta} \mathbf{x_k}]))},
\end{equation}
where $\mathbf{a} \in \mathbb{R}^{2\times d_v}$ and $\mathbf{\Theta} \in \mathbb{R}^{d_v \times d_e} $ are the trainable parameters. Here, $d_e$ is the original embedding dimension of $\mathbf{x_v}$, $d_v$ is the dimension of $\mathbf{x_v}'$. We apply these equations with $n_h$ different $\mathbf{\Theta}$ and $\mathbf{a}$, $n_h$ being the number of heads used in the attention layer. Then, we project back in $\mathbb{R}^{d_e}$ the $\mathbf{x_v}'$ with a linear layer, and sum the $n_h$ resulting vectors. After that, we apply a skip connection \cite{SkipCon} and a Batch-Normalization \cite{BatchNorm} layer $\mathrm{BN}$ such that:
\begin{equation}
    \mathbf{x_v}' = \mathrm{BN}(\mathbf{x_v} + \mathbf{x_v}').
\end{equation}
Finally, we introduce a feedforward network $\mathrm{FF}$ which is a $2$-layer fully connected network with $\mathrm{ReLU}$ activation functions. The input and output dimensions are $d_e$ and the hidden dimension is $d_h$. The final output is then:
\begin{equation}
    \mathbf{x_v} = \mathrm{BN}(\mathbf{x_v}' + \mathrm{FF}(\mathbf{x_v}') ).
    \label{eqFinal}
\end{equation}
We repeated the process described between \hyperref[eqGAT]{equation (\ref{eqGAT})} and \hyperref[eqFinal]{equation (\ref{eqFinal})} a total of $n_a$ times. Then, to propagate the information of each node to the others in the same connected component, we use an APPNP layer \cite{klicpera2018combining}. Given the matrix of nodes embedding $\mathbf{X}^{(0)}$, the adjacency matrix with inserted self-loops $\mathbf{\hat{A}}$, $\mathbf{\hat{D}}$ its corresponding diagonal degree matrix, and a coefficient $\alpha \in [0,1]$, it recursively applies $K$ times:
\begin{equation}
    \mathbf{X}^{(k)} = (1-\alpha) \mathbf{\hat{D}}^{-1/2} \mathbf{\hat{A}} \mathbf{\hat{D}}^{-1/2} \mathbf{X}^{(k-1)} + \alpha \mathbf{X}^{(0)}.
\end{equation}
We used $K=23$ when we trained on instances from $\mathbb{D}^{(1)}$ and $K=60$ when we trained on $\mathbb{D}^{(2)}$.

\paragraph{Graph embedding .} Given the resulting nodes embedding $\mathbf{x_v} \in \mathbb{R}^{d_e}$, in a skip-connection fashion, we concatenate the $\mathbf{x_v}$ back with the original two features $(w_v', \mathds{1}_{v \in I})$ \textit{($w_v'$ being the normalized weights)}. Then, the graph level representation vector is, for a graph of size $n$:
\begin{equation}
    \mathbf{r} = \sum_{i=1}^n \mathrm{softmax}(h_{gate}(\mathbf{x_i})) \odot h_r(\mathbf{x_i}).
\end{equation}
Here, $h_{gate}$ and $h_r$ are feedforward neural networks with $2$ layers and using $\mathrm{ReLU}$ activation functions. For both, the input dimension is $d_e+2$ and the hidden dimension is $d_h$. For $h_r$ the output dimension is $d_e$ whereas for $h_{gate}$, it is $1$. We used $n_p$ different versions of the parameters and concatenated the $n_p$ different outputs such that the final graph embedding has a dimension of $n_p \times d_e$.
\paragraph{Final steps .} We now have the nodes embedding $\mathbf{x_v} \in \mathbb{R}^{d_e}$ and a graph representation $\mathbf{r}$ of dimension ${d_e \times n_p}$. But the context for each node is not entirely contained in $\mathbf{r}$: the budgets, the size of the graph $n$ and the total sum of weights in the graph are still missing. Thus, we form a context vector $c_o$ as follows:
\begin{equation}
    c_o = \mathbf{r} || (n, \Omega, \Phi, \Lambda, \Omega/n, \Phi/n, \Lambda/n, \sum_{v \in V} w_v).
\end{equation}
When this is done, we perform, for each node, the concatenation $\mathbf{x_v} || c_o$. This is the entry of a feedforward neural network, $\mathrm{FF}_V$ or $\mathrm{FF}_Q$, that computes, for $\hat{V}$, the probability of each node being saved given the context, and the state-action values for $\hat{Q}$. The two feedforward networks are $3$-layers deep, with the first hidden dimension being $d_h$ and the second $d_e$. We used $\mathrm{LeakyReLU}$ activation functions, Batch Norm and dropout \cite{Dropout} with parameter $p$. Indeed, our experiment shows that using dropout at this stage helps prevent overfitting, and Batch Norm speedups the training. The last activation function for $\mathrm{FF}_Q$ is $\mathrm{ReLU}$ whereas for $\mathrm{FF}_V$ we use a sigmoid. Finally, for $\mathrm{FF}_V$, we output:
\begin{equation}
    \hat{V}(s) = \sum_{v \in V} P(v \;  \mathrm{is \; saved} \;| \; \mathrm{context}) w_v.
\end{equation}
For $\mathrm{FF}_Q$, we just mask the actions not available, {\ie} the nodes that are already labeled as attacked.

\paragraph{Hyperparameters .} All the negative slopes in the $\mathrm{LeakyReLU}$ we used were set by default at $0.2$. The value of all the other hyperparameters we introduced here were fixed using Optuna \cite{akiba2019optuna} with a TPE sampler and a Median pruner. The objective we defined was the value of the loss of $\hat{V}$ on a test set of exactly solved instances $\in \mathbb{D}^{(1)}$ with budgets $\Omega = 0, \; \Phi=1, \; \Lambda \in [\![0,3]\!]$. After running Optuna for $100$ trials, we fixed the following values for the hyperparameters: $d_e = 200, \; d_h = 400, \; d_v = 100, \; \alpha = 0.2, \; p = 0.2, \; n_a = 7, \; n_h = 3, \; n_p = 3$. It represents a total of $2,8$ million parameters to train for both $\hat{V}$ and $\hat{Q}$.

\subsection{Parameters of the training algorithms}
\paragraph{Comparison between the $3$ algorithms .}
For all $3$ algorithms, we used roughly the same values of parameters in order to make the comparison fair. All three algorithms were compared on instances from $\mathbb{D}^{(1)}$. The batch size was fixed to $m=256$. Although we share our training times for the sake of transparency and to compare the methods, we want to highlight that our code is hardly optimized and that cutting the times presented here may be easy with a few improvements.\\
For \hyperref[MultiLC]{MultiL-Cur}, we used a training set of size $100 \;000$ and a validation set of $1000$ instances at each stage of the curriculum. As there are $8$ distributions to learn from \textit{(as we use afterstates, there is no need to learn the values of instances having $\Omega = 3, \; \Phi=3, \; \Lambda=3$ as budgets)}, this amounts for a total of $808 \; 000$ episodes. At each stage $j$, we trained our expert $\hat{V}_j$ for $120$ epochs, meaning that we used a total of $375\; 000$ training steps to finish the curriculum, which necessitated a total of $36$ hours. Most of the training time was directed towards generating the training sets, {\ie} performing the greedy rollouts. Moreover, cutting a few hours in this training time is also possible if we do not monitor the evolution of the training on the test sets \textit{(computing the loss on the test sets regularly takes time)}.\\ 
For \hyperref[MultiLMC]{MultiL-MC}, we fixed $\mathcal{C}$, the capacity of the replay memory to be equal to $27 \times 256$ so that each Monte-Carlo sample is exactly seen $27$ times. We used a total of $700 \; 000$ episodes here, resulting in an average of $377 \; 000$ training steps, which took $56$ hours. Indeed, the length of the episodes here is longer on average than the ones used in the curriculum as we directly begin from instances sampled from $\mathbb{D}^{(1)}$ and not the ones where moves were already performed randomly. So the rollout process lasts longer, which is what takes time in our algorithm.\\
Finally, for \hyperref[MultiLDQN]{MultiL-DQN}, we used a capacity $\mathcal{C}=10\;240$. In order to perform the same number of training steps for the same number of episodes than the other two algorithms, we generated our data in batches of size of $16$: at each time step, there are $16$ new instances pushed in memory. We used a total of $16 \times 60 \; 000 = 960 \; 000$ episodes. The number of training steps performed was $370 \; 000$ on average. The time necessary for that was $29$ hours. Although this is lower than the other two methods \textit{(due to a much quicker rollout)}, the optimality gap and approximation ratio were so high \textit{($\eta=32.55 \%$, $\zeta=1.54$)} with this amount of data that we actually decided to re-launch an experiment using more episodes. The graphs in \hyperref[Table1]{Table \ref{Table1}} show the behaviour during training of the $3$ algorithms with the setting described until now, however the results of optimality gap and approximation ratio for the \hyperref[MultiLDQN]{MultiL-DQN} algorithm are those from a different training setting where we used much more episodes. We made a second experiment were we generated batches of size $128$ instead of $16$, amounting the number of episodes used to $7\; 680 \; 000$ for the same number of training steps. This second experiment took $72$ hours, proving that \hyperref[MultiLDQN]{MultiL-DQN} actually necessitates way more data than the two other algorithms, for worse results. \\
For both \hyperref[MultiLMC]{MultiL-MC} and \hyperref[MultiLDQN]{MultiL-DQN}, we used a probability $\epsilon$ with an exponential decay: $\epsilon_{start}=0.9$, $\epsilon_{end}=0.05$ and a temperature $T_{decay}=1000$.

\paragraph{Curriculum on larger graphs .} For the results in \hyperref[Table2]{Table \ref{Table2}}, we trained our experts on $\mathbb{D}^{(2)}$. As these instances are of larger size and theoretically harder to solve, we decided to train for longer our experts $\hat{V}_j$. We used a training set of size $120 \; 000$, a validation set of size $2000$ and a number of epoch per stage of $200$ for the MCN problem. For MCN$_{dir}$ and MCN$_w$, we used a training set of size $60 \;000$, a validation set of size $1000$ and $400$ epochs at each stage. The MCN training took roughly a week to run, for MCN$_w$ and MCN$_{dir}$ it took $5$ days: the rollout on larger graphs takes a long time.
\subsection{Details on the test sets}

The test sets we used for the results in \hyperref[Table1]{Table \ref{Table1}} consisted in $1000$ exactly solved instances for each of the $8$ different training distributions used during the curriculum. For the results in \hyperref[Table2]{Table \ref{Table2}}, we gathered the instances from \cite{Baggio2017MultilevelAF} for the MCN. As they put a threshold of $2$ hours for their solver MCN$^{MIX}$, the number of instances solved for each of the sizes is different. Moreover, an extensive study of the graphs of size $40$ has been done in their paper. For MCN$_{dir}$ and MCN$_w$, we used the solvers described in \hyperref[TweakExact]{Appendix \ref{TweakExact}}. The size of the training sets considered in \hyperref[Table2]{Table \ref{Table2}} are then:
\begin{table}[h]
\begin{center}
\caption{Sizes of the test sets used.}
\label{Table3}
\vskip 0.10in
\begin{tabular}{c|c|c|c}
\toprule
\multicolumn{1}{c|}{} & \multicolumn{3}{c}{\textbf{size of $\mathcal{D}_{test}$}} \\
$|V|$ & MCN & MCN$_{dir}$ & MCN$_w$ \\
\midrule
20 & 120 & 36 & 36\\
40  & 876 & 35 & 34\\
60 & 110 & 23 & 29 \\
80  & 101 & 12 & 30 \\
100 & 85 & 11 & 27 \\
\bottomrule
\end{tabular}
\end{center}
\end{table}

\section{Extended Results}
\label{MoreResults}
\subsection{Training the Q network with more data}
As discussed earlier, we trained an agent on $\mathbb{D}^{(1)}$ with \hyperref[MultiLDQN]{MultiL-DQN} using two configurations. First, we used $960 \; 000$ episode for $370 \; 000$ optimization steps. Faced with the poor results, we re-trained our agent using more data: $7\; 680 \; 000$ episodes for the same number of steps. We compare the results of the two methods in \hyperref[Table4]{Table \ref{Table4}}.
\begin{table}[ht]
\begin{center}
\caption{Comparison between two configurations of training for $\hat{Q}$. In Config. 1, we trained with $960 \; 000$ episodes while in Config. 2 we used $7\; 680 \; 000$. We display the evolution of the losses during training on $8$ test sets of size $1000$. We measure the resulting optimality gap $\eta$ and approximation ratio $\zeta$ on $3$ different test sets, one for each of the $3$ levels of the problem.}
\label{Table4}
\vskip 0.10in
\resizebox{\columnwidth}{!}{
\begin{tabular}{c|cc|cc}
\multicolumn{1}{c}{} & \multicolumn{2}{c}{\textbf{Config. 1}} &\multicolumn{2}{c}{\textbf{Config. 2}} \\
\multicolumn{1}{c}{} &
\multicolumn{2}{c}{
    \begin{subfigure}[t]{0.4\columnwidth}
        \centering
        \includegraphics[width=\columnwidth]{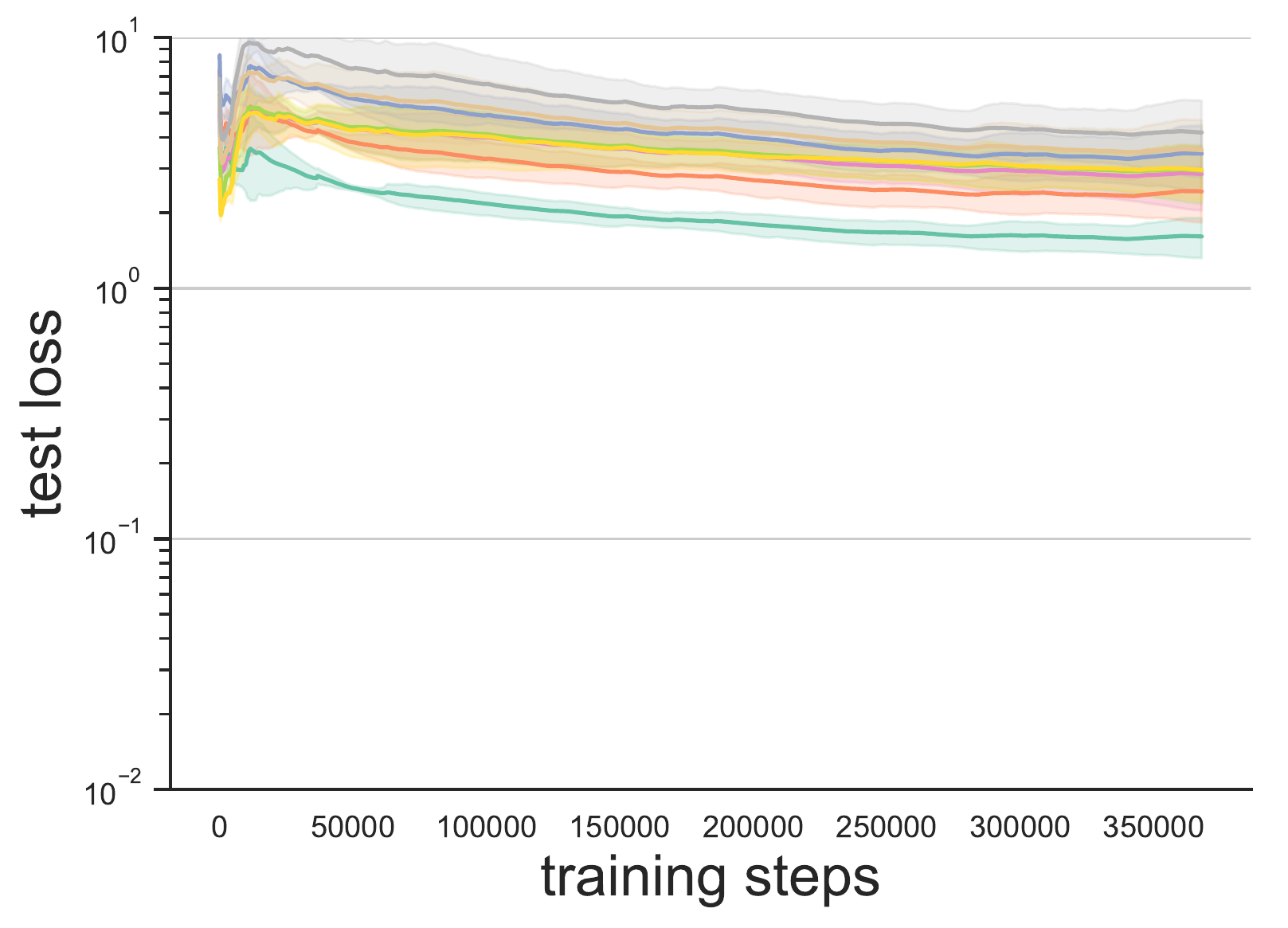}
    \end{subfigure}
    }
    & 
\multicolumn{2}{c}{
    \begin{subfigure}[t]{0.4\columnwidth}
        \centering
        \includegraphics[width=\columnwidth]{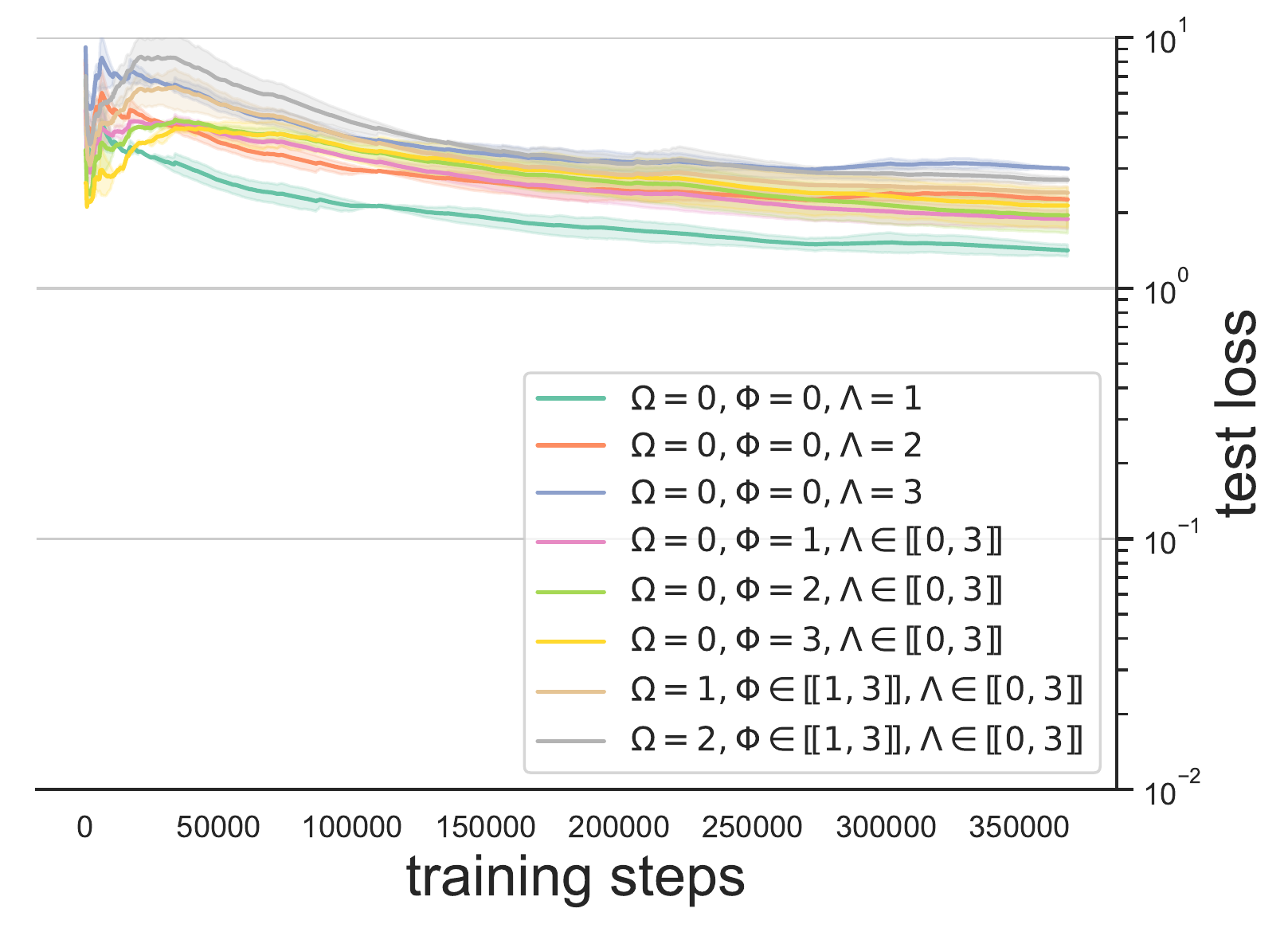}
    \end{subfigure} 
    }
    \\
    \toprule
     \textbf{Level} & $\eta (\%)$ & $\zeta$ & $\eta (\%)$ & $\zeta$ \\
     \midrule 
      Vaccination & 29.8 & 1.54 & \textbf{6.7} & \textbf{1.08} \\
      Attack & 35.8 & 1.45 & \textbf{21.2} & \textbf{1.18} \\
      Protection & 28.8 & 1.63 & \textbf{4.01} & \textbf{1.07} \\
\bottomrule
\end{tabular} }
\end{center}
\end{table}
We clearly see that training with more data radically impacts the results. More than that, there is a necessity of training with many episodes to obtain reasonable results. We also notice a worse behaviour at the attack stage compared to the other two where it is the defender's turn to play. Thus, we may benefit from adapting the \hyperref[MultiLDQN]{MultiL-DQN} algorithms to use two Q networks, one for each player.

\subsection{Training the Value network with less data}
In order to assess the capacity of our curriculum to use less data and less training steps, we trained our value network on $\mathbb{D}^{(1)}$ using a second configuration. We re-trained our experts using $50 \; 000$ instances in the training sets, with $60$ epochs at each stage, instead of $100 \; 000$ and $120$ originally. 
\begin{table}[ht]
\begin{center}
\caption{Comparison between two configurations of curriculum for $\hat{V}$. In Config. 1, we trained with a total of $800 \; 000$ episodes and $375 \; 000$ optimization steps while in Config. 2 we used $400 \; 000$ episodes and $93 \; 750$ steps. We display the evolution of the losses during training on $8$ test sets of size $1000$ arriving at different stages of the curriculum. We measure the resulting optimality gap $\eta$ and approximation ratio $\zeta$ on $3$ different test sets, one for each of the $3$ levels of the problem.}
\label{Table5}
\vskip 0.10in
\resizebox{\columnwidth}{!}{
\begin{tabular}{c|cc|cc}
\multicolumn{1}{c}{} & \multicolumn{2}{c}{\textbf{Config. 1}} &\multicolumn{2}{c}{\textbf{Config. 2}} \\
\multicolumn{1}{c}{} &
\multicolumn{2}{c}{
    \begin{subfigure}[t]{0.4\columnwidth}
        \centering
        \includegraphics[width=\columnwidth]{Images/cur_loss.pdf}
    \end{subfigure}
    }
    & 
\multicolumn{2}{c}{
    \begin{subfigure}[t]{0.4\columnwidth}
        \centering
        \includegraphics[width=\columnwidth]{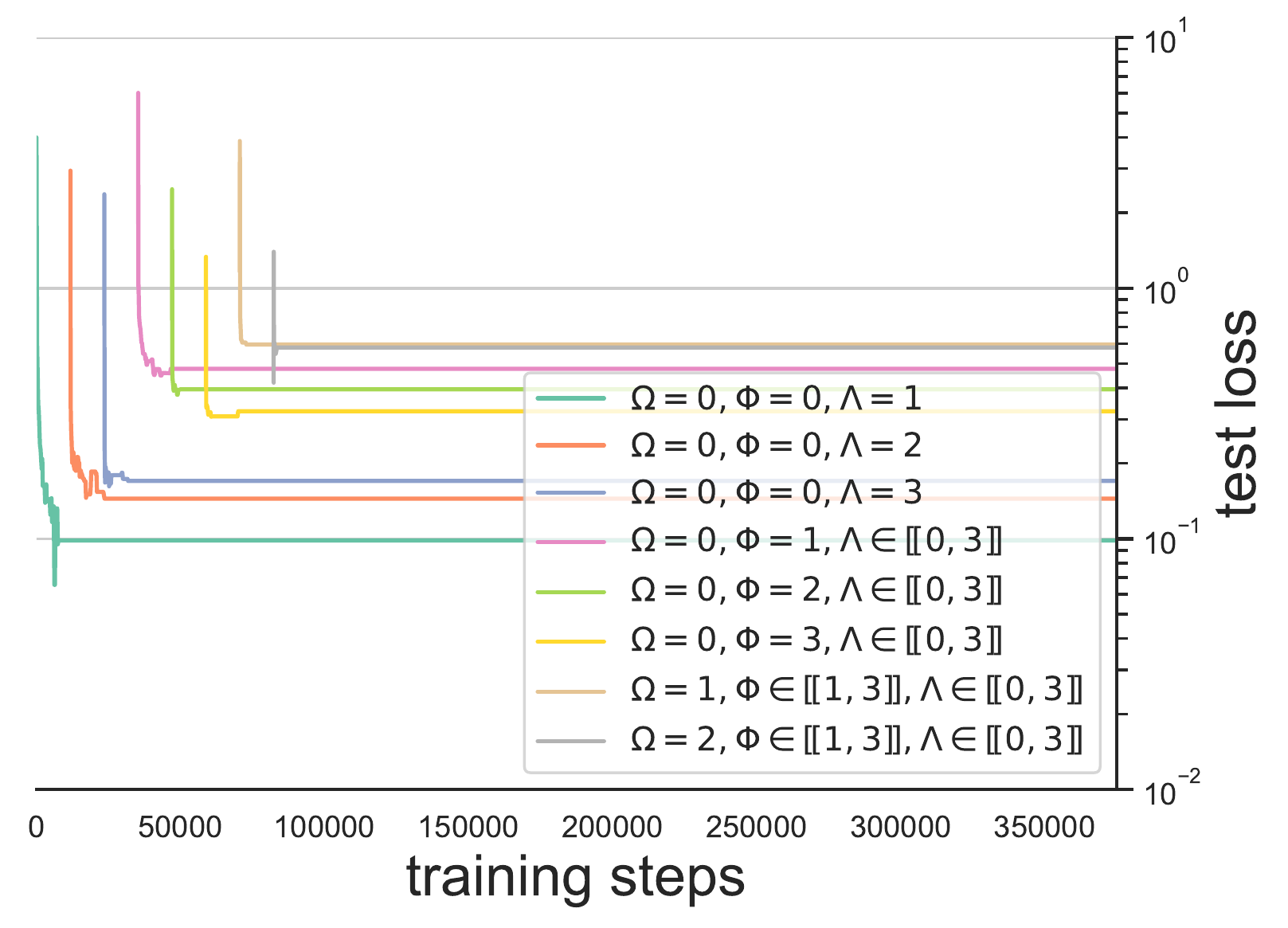}
    \end{subfigure} 
    }
    \\
    \toprule
     \textbf{Level} & $\eta (\%)$ & $\zeta$ & $\eta (\%)$ & $\zeta$ \\
     \midrule 
      Vaccination & \textbf{0.955} & \textbf{1.011} & 1.126 & 1.013 \\
      Attack & \textbf{0.409} & \textbf{1.004} & 0.913 & 1.009\\
      Protection & \textbf{0.005} & \textbf{1.000} & \textbf{0.005} &  \textbf{1.000}\\
\bottomrule
\end{tabular} }
\end{center}
\end{table}

The results in \hyperref[Table5]{Table \ref{Table5}} clearly show that training with half the data and a quarter of the steps in the curriculum hardly affects the end results, demonstrating the efficiency of the method. Training with Config. 2 took $15$ hours compared to the $36$ necessary with Config. 1.

\subsection{Comparing the difficulty to learn to solve the 3 problems}

In this part, we propose to compare the difficulty our curriculum has on learning to solve the $3$ different problems MCN, MCN$_{dir}$ and MCN$_w$ on instances from $\mathbb{D}^{(1)}$. For that, we ran our curriculum in exactly the same way $3$ times, except for the distribution of graphs from which we sampled our instances: undirected with unit weights for the MCN, directed with unit weights for MCN$_{dir}$ and undirected with integer weights for MCN$_w$. In \hyperref[ValLoss]{Figure \ref{ValLoss}}, we compare the values of the $3$ validation losses during the training, along with the values of the approximation ratio $\zeta$ and optimality gap $\eta$ on $3$ test sets of $9000$ exactly solved instances from $\mathbb{D}^{(1)}$ in \hyperref[Table6]{Table \ref{Table6}}.

\begin{figure}[h]
\sbox{\figurebox}{\includegraphics[width=0.5\columnwidth]{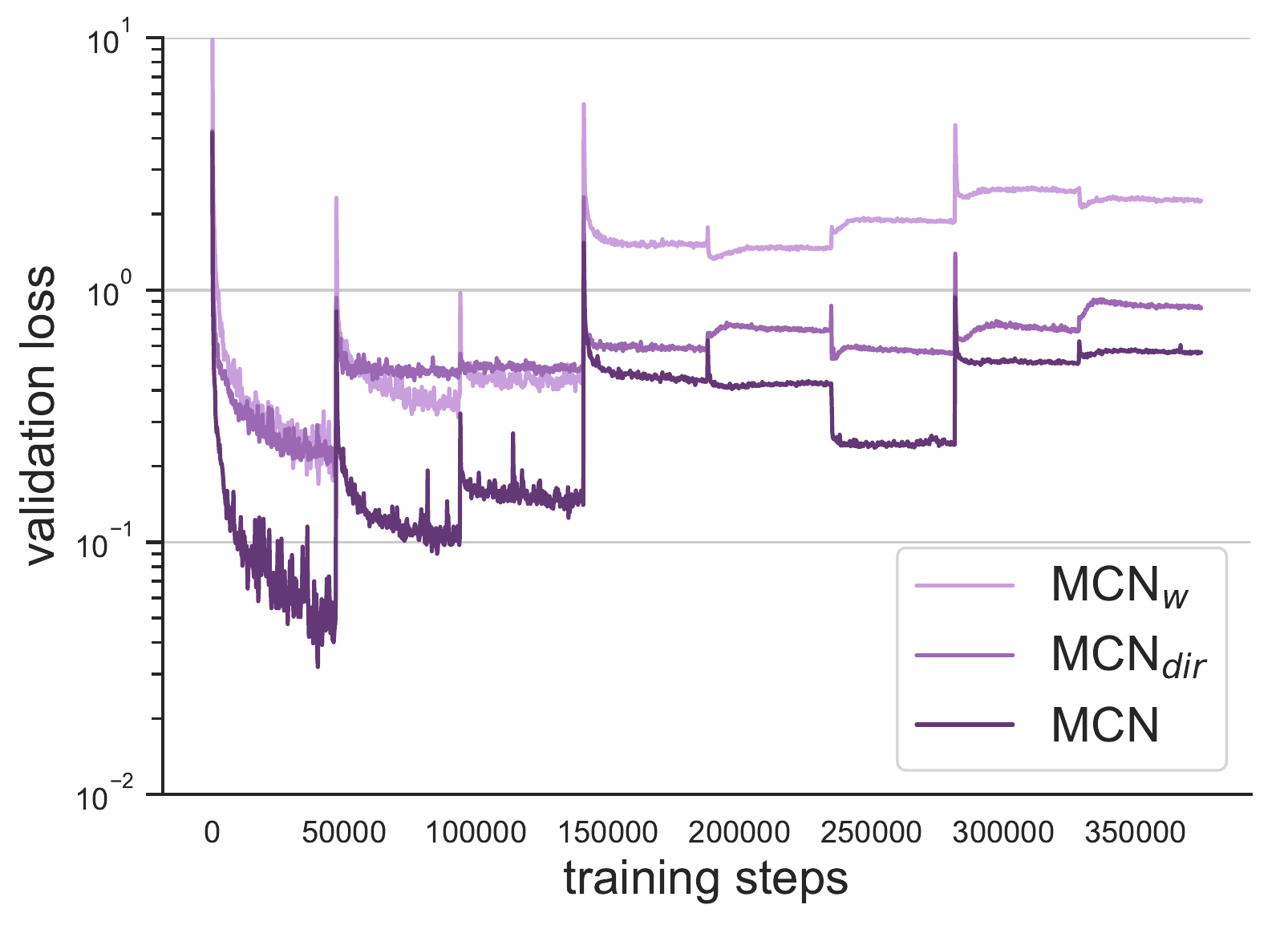}}

\begin{minipage}[t][\ht\figurebox]{\textwidth-\wd\figurebox-1em}
\centering\footnotesize
\vspace*{5pt}
\begin{tabular}{l|cc} \toprule
\textbf{Problem} & $\eta(\%)$ & $\zeta$ \\ \midrule
        MCN$_w$ & 7.08 & 1.069  \\ 
        MCN$_{dir}$ & 2.84 & 1.032  \\ 
        MCN & 0.51 & 1.006 \\ \bottomrule
\end{tabular}
\captionof{table}{Values of the approximation ratio and optimality gap on a test sets of exactly solved instances from $\mathbb{D}^{(1)}$ for each of the $3$ problems.}
\label{Table6}

\vfill

\captionof{figure}{Evolution of the loss on the successive validation sets during the curriculum for the $3$ problem considered.}
\label{ValLoss}
\end{minipage}\hfill
\begin{minipage}[t]{\wd\figurebox}
\centering
\vspace*{0pt}
\usebox{\figurebox}
\end{minipage}
\end{figure}
Both the table and the figure seem to tell the same story: the easiest problem to learn to solve with our curriculum is the MCN, followed by MCN$_{dir}$, the hardest one being MCN$_w$.

\subsection{Assessing the ability to generalize to larger graphs}
Previous work on learning to solve single level combinatorial problems with graph neural networks such as \cite{Dai2017, kool2018attention} reported that their trained agent managed to satisfyingly solve instances with larger graphs at test time than the ones used in their training distributions. In order to assess if this holds for agents trained with our curriculum on the multilevel combinatorial problem, we trained, for each of the $3$ problems, our agents on both $\mathbb{D}^{(1)}$ and $\mathbb{D}^{(2)}$, then measured how well they behaved on increasingly larger graphs at test time. We report our results in \hyperref[Table7]{Table \ref{Table7}}.

\begin{table}[h]
\begin{center}
\caption{Evolution of the optimality gap $\eta$ and the approximation ratio $\zeta$ with the size of the graphs at test time for each of the $3$ problems considered.}
\label{Table7}
\vskip 0.10in
\resizebox{\columnwidth}{!}{
\begin{tabular}{ccc}
 & $\eta(\%)$ & $\zeta$ \\
        &
    \multirow{10}{*}{
    \begin{subfigure}[t]{0.4\columnwidth}
        \centering
        \includegraphics[width=\columnwidth]{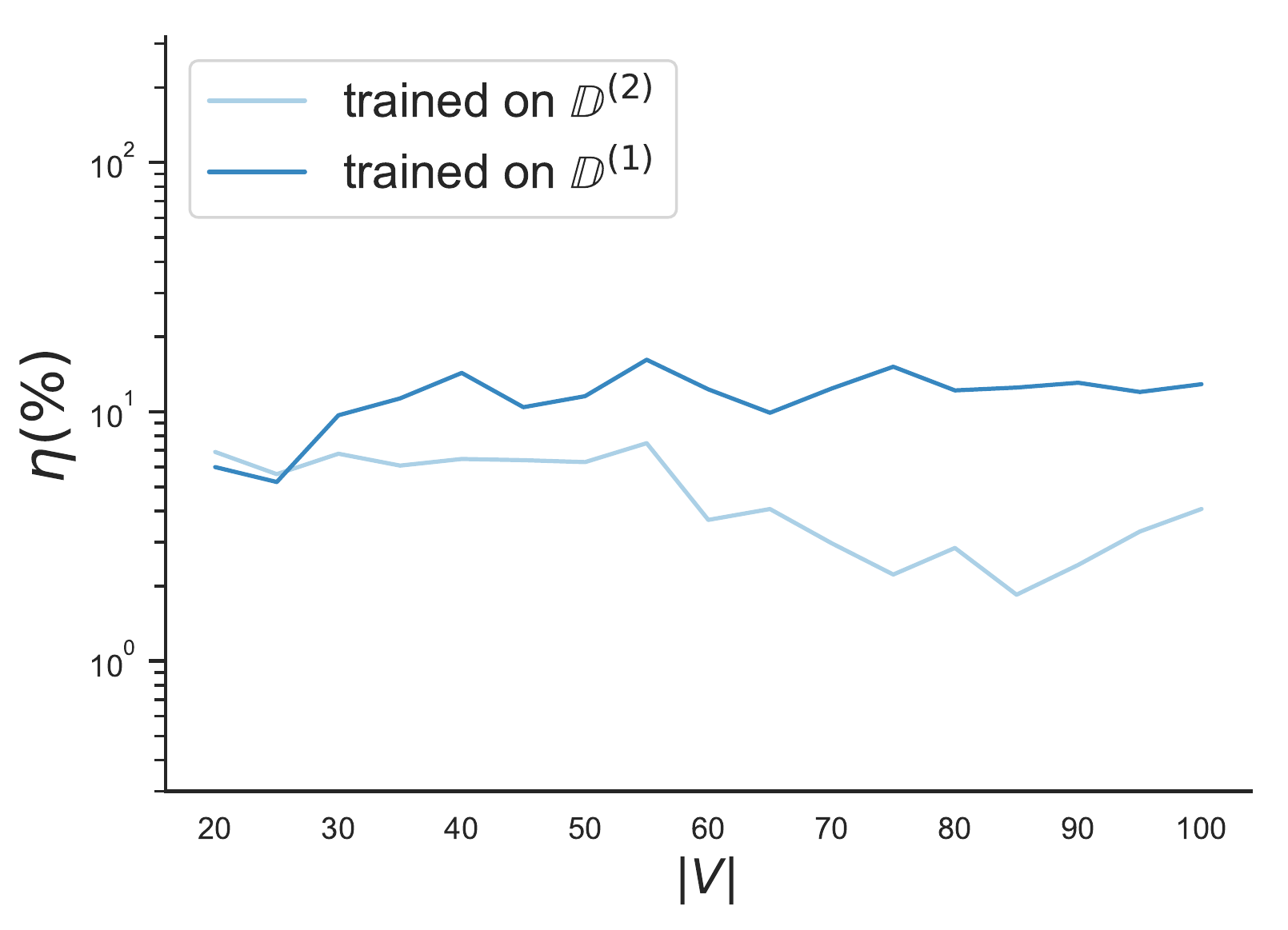}
    \end{subfigure} }
        &
    \multirow{10}{*}{
    \begin{subfigure}[t]{0.4\columnwidth}
        \centering
        \includegraphics[width=\columnwidth]{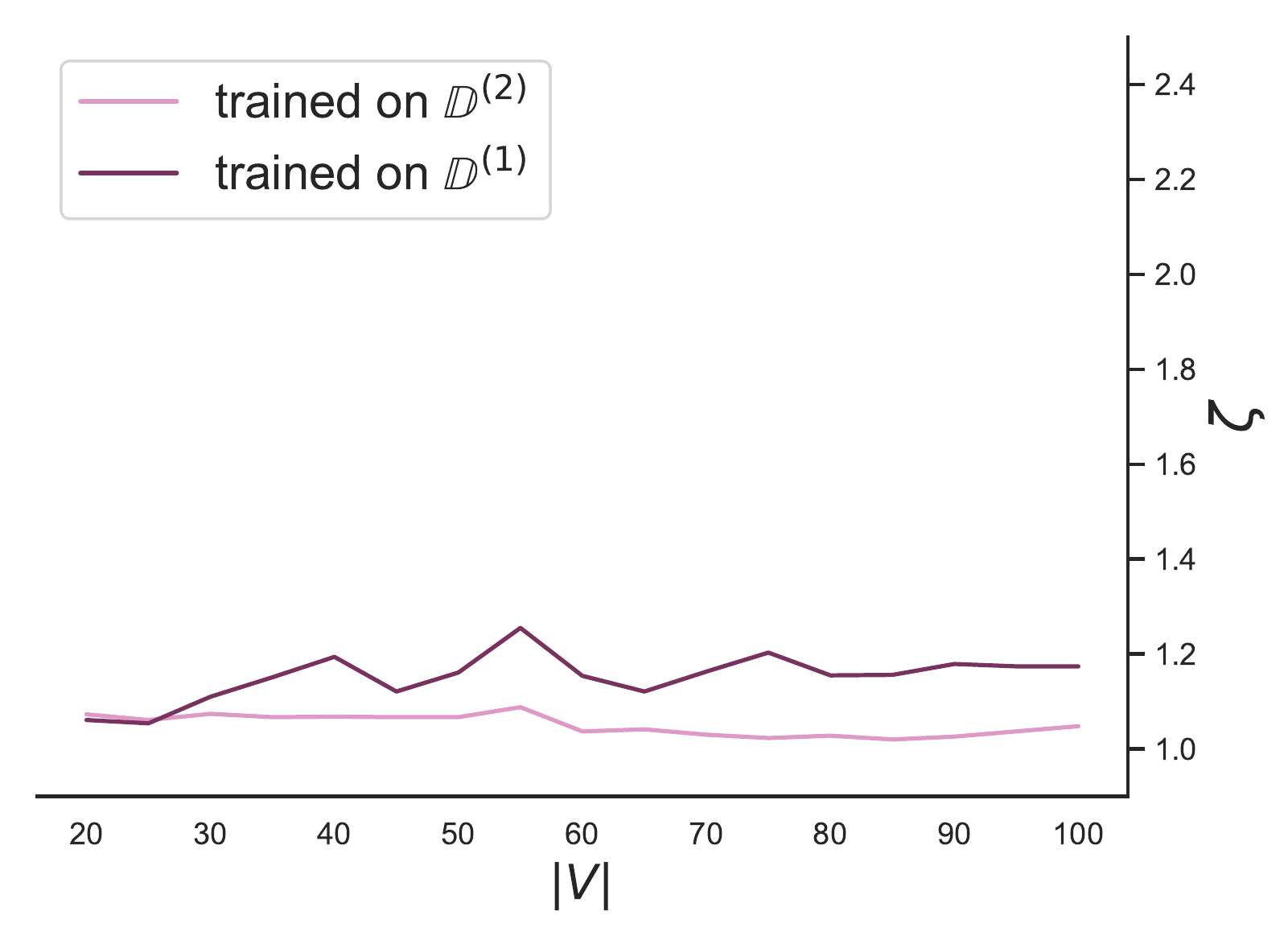}
    \end{subfigure} } \\
    & & \\
    & & \\
    \parbox[t]{0mm}{\multirow{1}{*}{\rotatebox[origin=c]{90}{\textbf{MCN$_{w}$}}}} & & \\
    & & \\
    & & \\
    & & \\
    & & \\
    & & \\
    & & \\
    & & \\
        &
        \multirow{10}{*}{
        \begin{subfigure}[t]{0.4\columnwidth}
        \centering
        \includegraphics[width=\columnwidth]{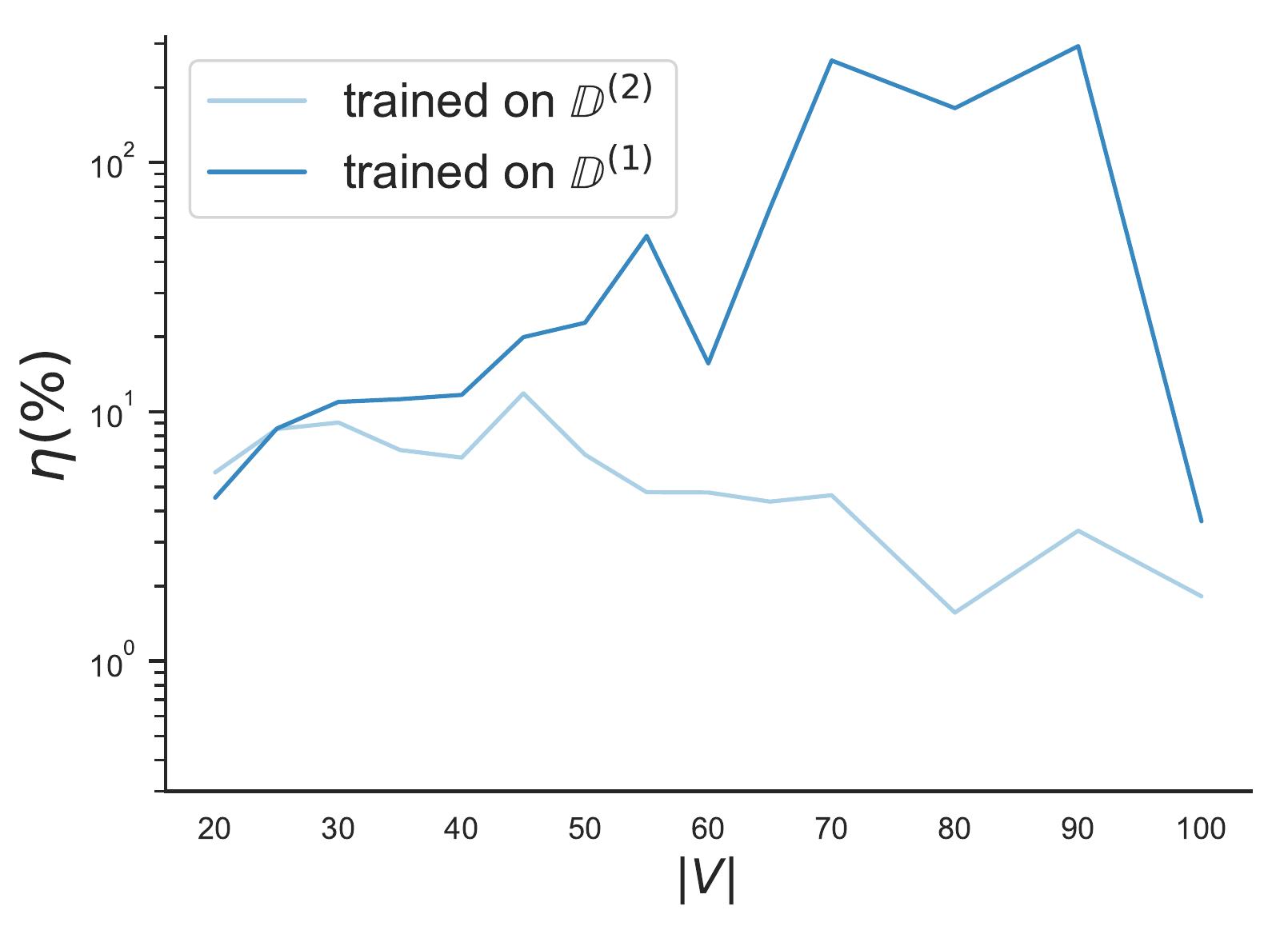}
    \end{subfigure} }
        &
        \multirow{10}{*}{
    \begin{subfigure}[t]{0.4\columnwidth}
        \centering
        \includegraphics[width=\columnwidth]{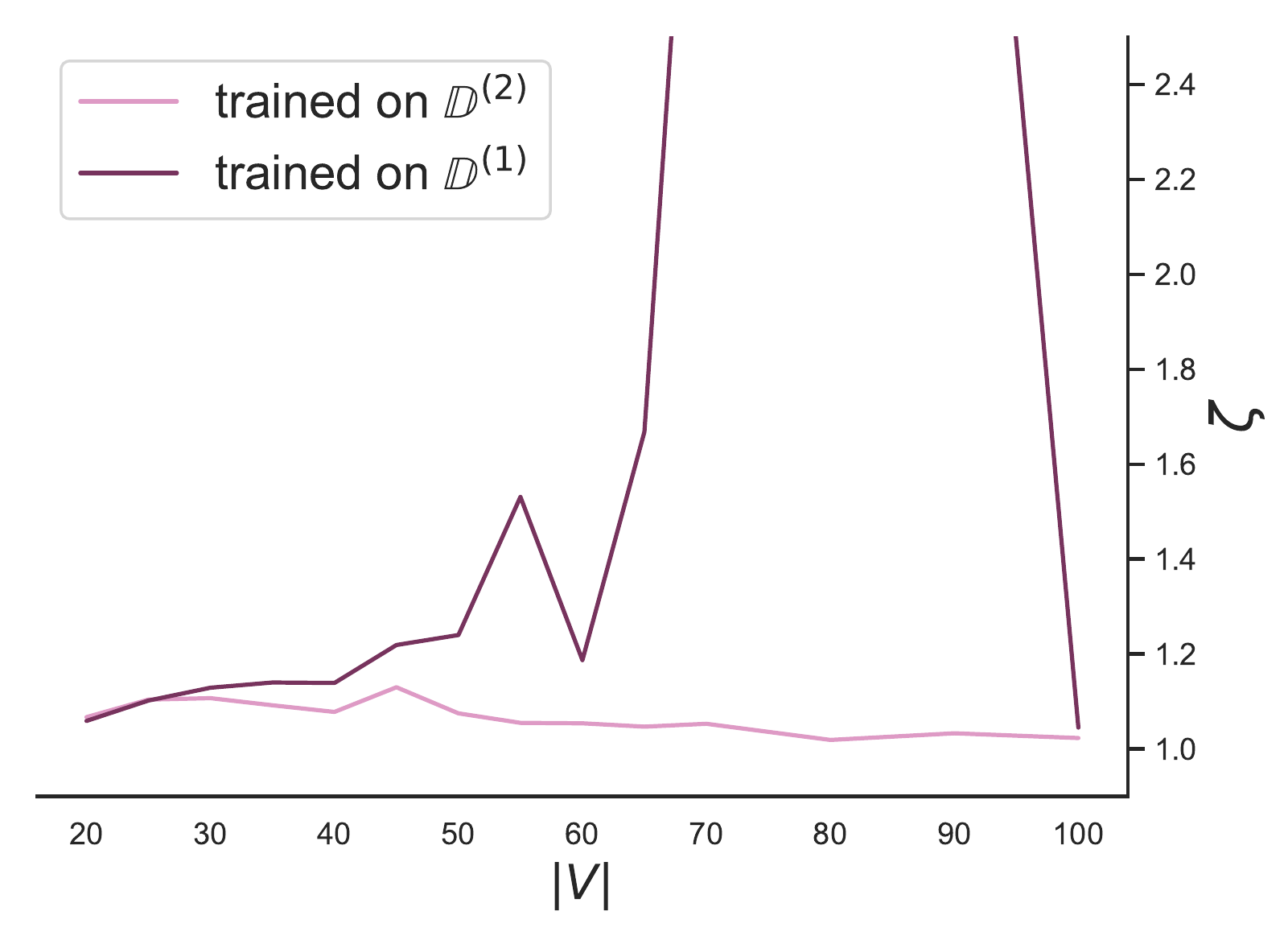}
    \end{subfigure} } \\
    & & \\
    \parbox[t]{0mm}{\multirow{1}{*}{\rotatebox[origin=c]{90}{\textbf{MCN$_{dir}$}}}} & & \\
    & & \\
    & & \\
    & & \\
    & & \\
    & & \\
    & & \\
    & & \\
    & & \\
        &
        \multirow{9}{*}{
        \begin{subfigure}[t]{0.4\columnwidth}
        \centering
        \includegraphics[width=\columnwidth]{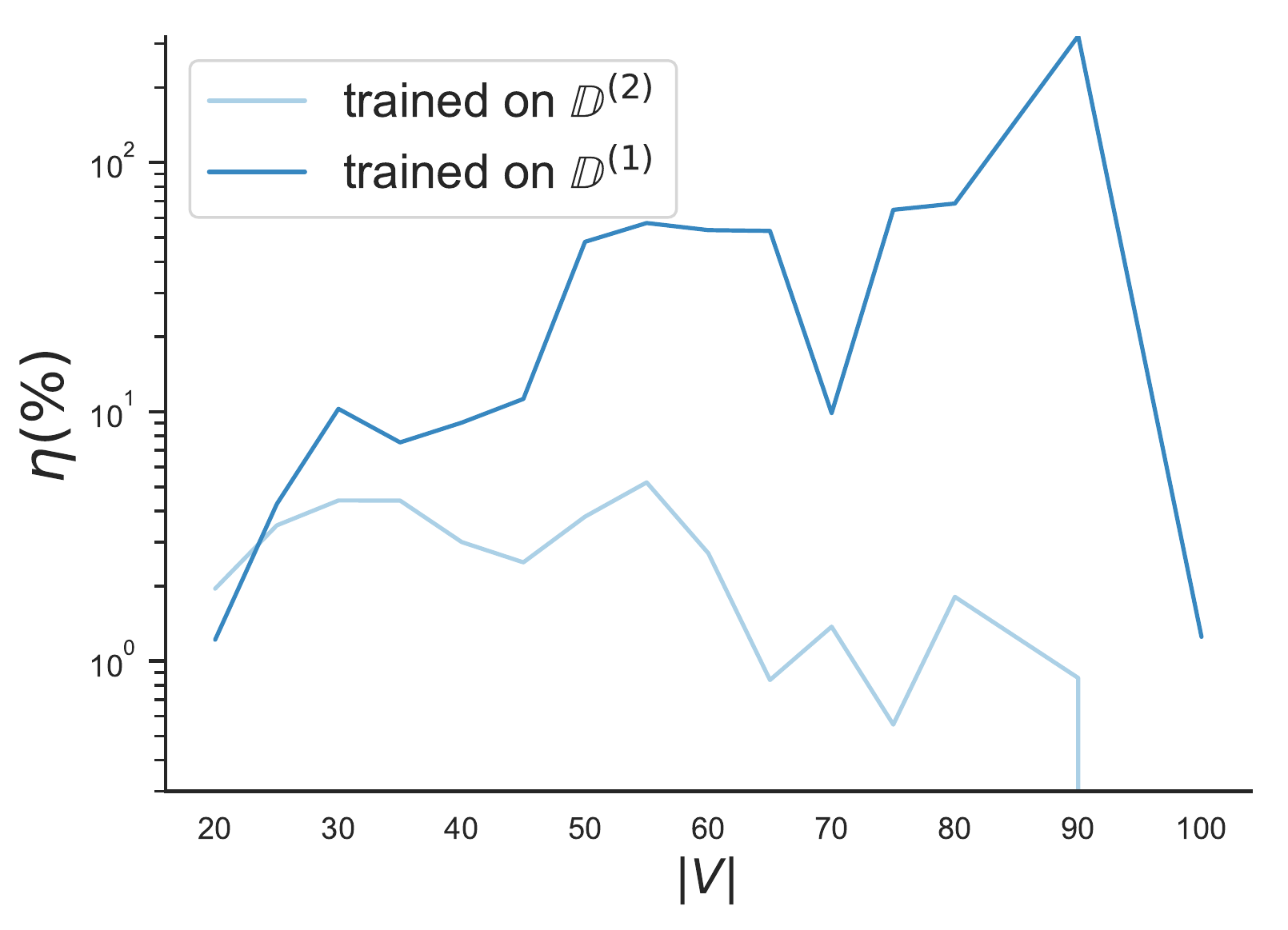}
    \end{subfigure}  }
        &
        \multirow{9}{*}{
    \begin{subfigure}[t]{0.4\columnwidth}
        \centering
        \includegraphics[width=\columnwidth]{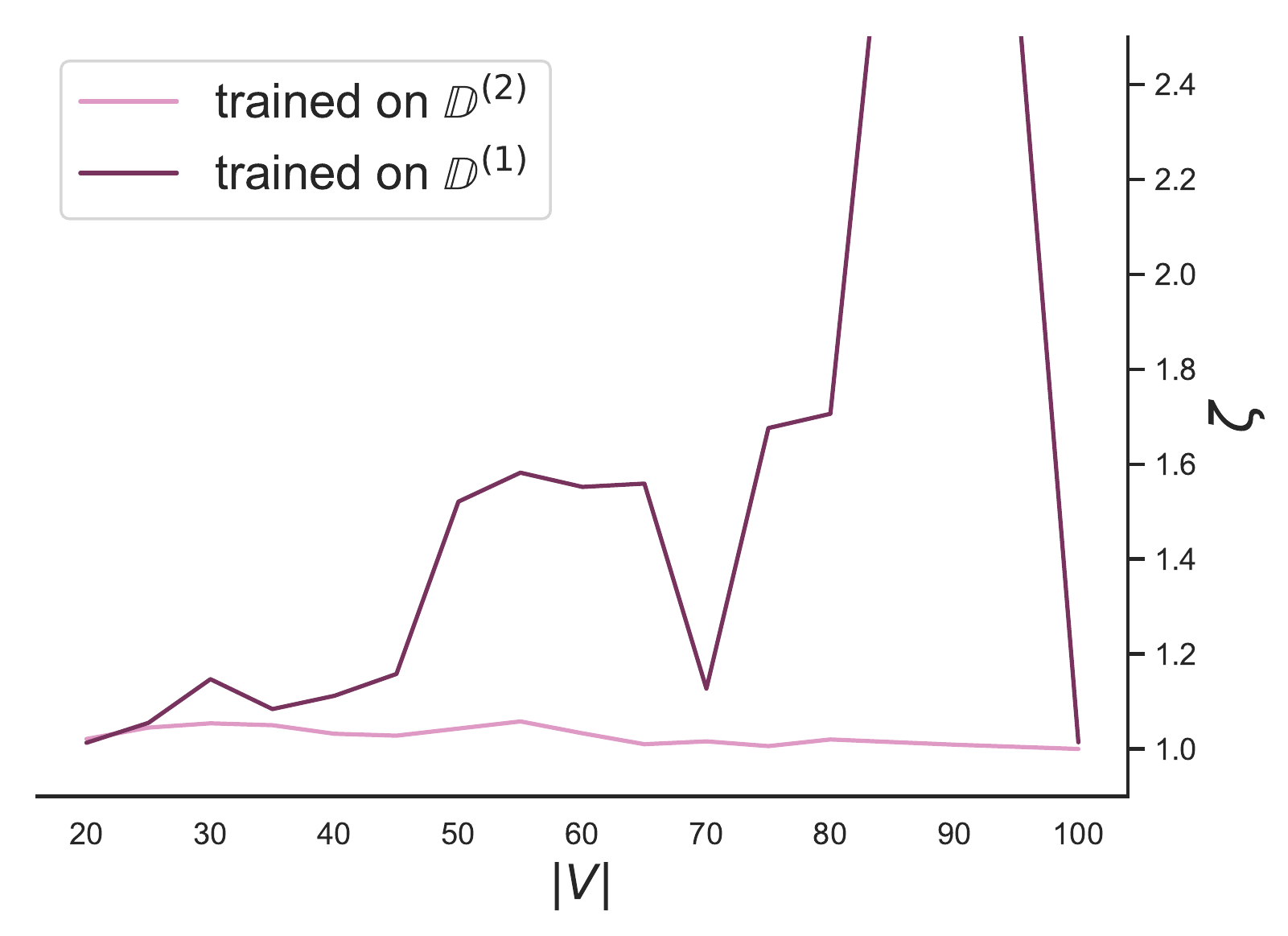}
    \end{subfigure}} \\
    & & \\
    & & \\
    \parbox[t]{0mm}{\multirow{1}{*}{\rotatebox[origin=c]{90}{\textbf{MCN}}}} & & \\
    & & \\
    & & \\
    & & \\
    & & \\
    & & \\
    & & \\
\end{tabular}}
\end{center}
\end{table}

We clearly see in \hyperref[Table7]{Table \ref{Table7}} that the experts trained on $\mathbb{D}^{(2)}$, {\ie} on larger graphs, perform better than the ones trained on $\mathbb{D}^{(1)}$. From the curves, it seems that our experts can generalize to graphs up to $2$ times larger than the ones they were trained on. The fact that for the curves about $\mathbb{D}^{(1)}$ there is first an increase of the values of the metrics and a sudden decrease around $|V|=80$ may be explained by the fact that $\eta$ and $\zeta$ do not directly measure the goodness of our heuristics. Indeed, if we were to measure how good the decisions taken at a certain level are, we should solve to optimality the subsequent lower levels, which is not what we do here: we use our heuristics everywhere. Thus, when our heuristics perform too badly at \emph{each} level, {\ie} defending poorly \emph{but also} attacking poorly, there is a chance that the \emph{value} measured in the end of the game is actually not too far from the one that would have followed optimal decisions. To produce the graphs in \hyperref[Table7]{Table \ref{Table7}}, we generated $3$ test sets, one for each problem, using the solver described in \hyperref[TweakExact]{Appendix \ref{TweakExact}} with IBM ILOG CPLEX 12.9.0. The number of instances in those datasets for each value of $|V|$ are listed in \hyperref[Table8]{Table \ref{Table8}}:
\begin{table}[h]
\begin{center}
\caption{Sizes of the test sets used for the results in Table \ref{Table7}.}
\label{Table8}
\vskip 0.10in
\resizebox{\columnwidth}{!}{
\begin{tabular}{l|ccccccccccccccccc}
\toprule
\multicolumn{1}{c}{$|V|=$} &  20 & 25 & 30 & 35 & 40 & 45 & 50 & 55 & 60 & 65 & 70 & 75 & 80 & 85 & 90 & 95 & 100 \\
\midrule
MCN$_w$ & 36 &36 &36 &35 &34 &33 &29 &29 &29 &28 &31 &29 &30 &28 &29 &29 &27 \\
MCN$_{dir}$ &36 &36 &36 &34 &35 &32 &29 &27 &23 &17 &13 &- &12 &- &10 &- &11   \\
MCN &36 &36 &36 &32 &30 &27 &29 &26 &24 &22 &17 &15 &14 &- &9 &- &10   \\
\bottomrule
\end{tabular}}
\end{center}
\end{table}

\subsection{Identifying multiple optimal solutions}

In many situations, there exists multiple solutions to an instance of a combinatorial problem. For some methods, this can represent a challenge as it clouds the decision-making process \cite{Li2018}. However, being able to produce multiple optimal solutions to a combinatorial problem is of interest. Here, the formulation we used naturally allows to identify many of the optimal solutions, assuming our value networks correctly approximate the values of each afterstate. Indeed, if our agents correctly label each node with its value \textit{({\ie} the value of the afterstate if the action is performed on the node, plus reward)}, then identifying all the possible ways of acting optimally is directly readable from them, as shown in the example presented in \hyperref[Example]{Figure \ref{Example}}.
\begin{figure}[h]
    \centering
    \begin{subfigure}[t]{0.48\columnwidth}
        \centering
        \includegraphics[width=\columnwidth]{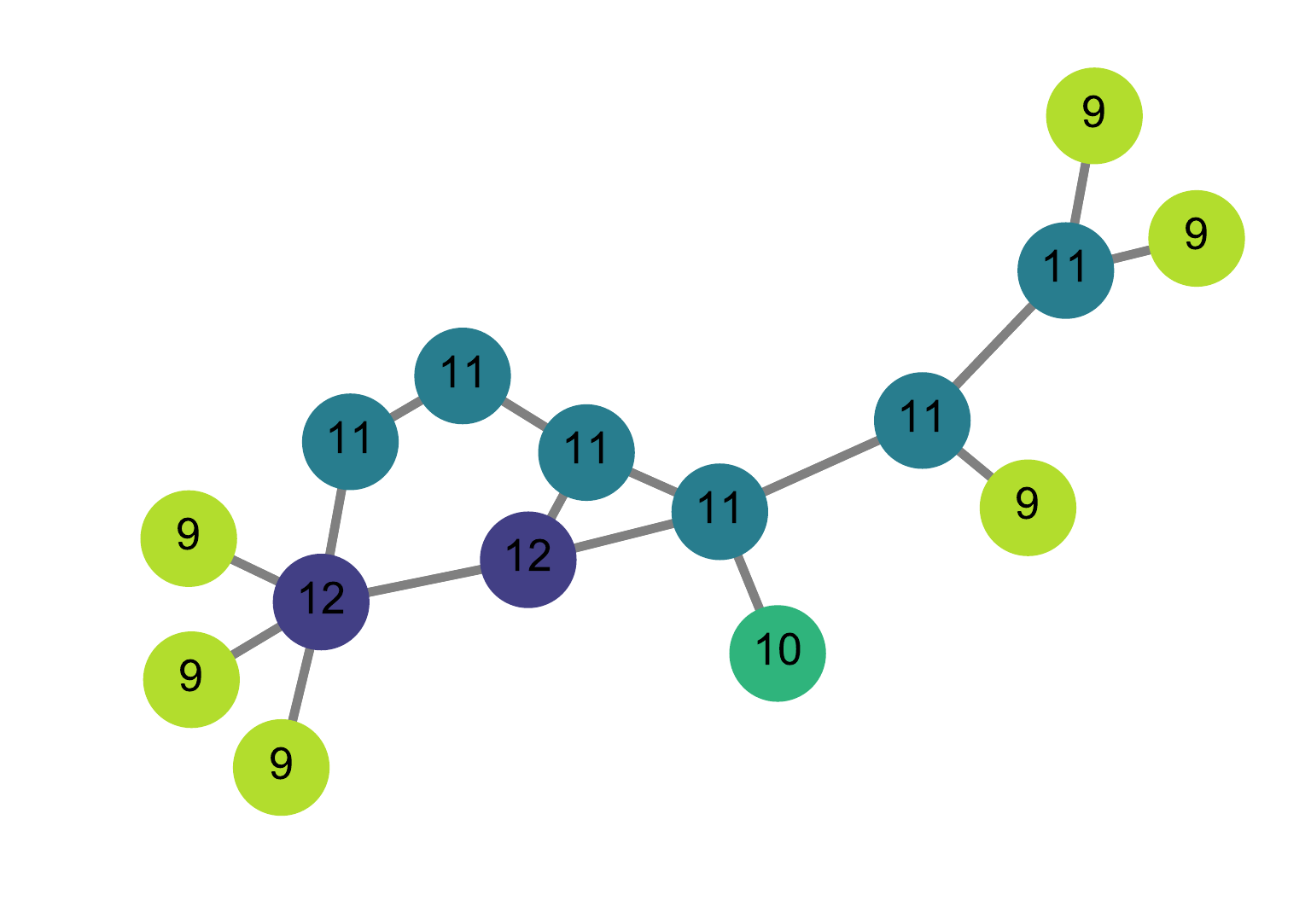}
        \caption{Exact values}
    \end{subfigure}
    ~ 
    \begin{subfigure}[t]{0.48\columnwidth}
        \centering
        \includegraphics[width=\columnwidth]{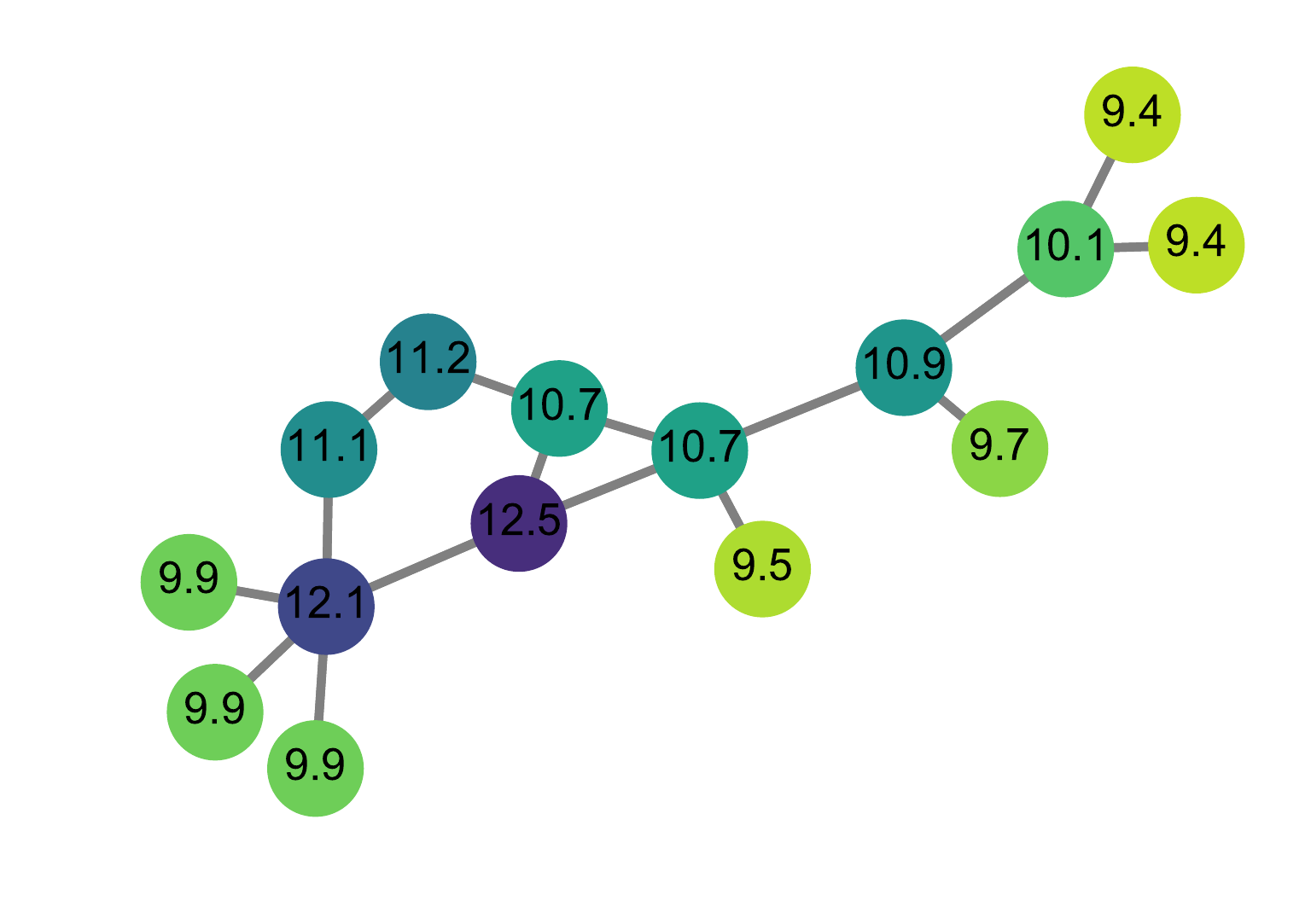}
        \caption{Approximate values }
    \end{subfigure}
\caption{Exact values and approximate values on an instance of MCN constituted of a graph $G$ and budgets $\Omega=1$, $\Phi=1$, $\Lambda=2$. The exact value of each node is obtained by removing \textit{(vaccinating)} the said node from $G$ and solving exactly the subsequent afterstate with $\Omega$ set to $0$. The approximate values are obtained by feeding the afterstates to the expert trained on budgets $\Phi = 1$, $\Lambda \in [\![0,3]\!]$ during the curriculum for instances from $\mathbb{D}^{(1)}$.}
\label{Example}
\end{figure}

In \hyperref[Example]{Figure \ref{Example}}, there are $2$ optimal vaccinations: the two blue nodes with value $12$. Although the approximate values are not perfectly aligned with the exact ones, the two optimal decisions are clearly identifiable from them, demonstrating the ability of our method to detect multiple optimal solutions.

\end{appendices}

\end{document}